\newcommand{\NP}{\ensuremath{\mathbf{NP}}}
\newcommand{\NPhard}{\NP-hard\xspace}
\newcommand{\algcoreclustering}{1\xspace}
\newcommand{\algcoocpair}{2\xspace}
\newcommand{\algcoocbs}{3\xspace}
\begin{document}

\title{Clustering with Confidence:\\Finding Clusters with Statistical Guarantees}

\author{Andreas Henelius\inst{1},
        Kai Puolam\"{a}ki\inst{1},
        Henrik Bostr\"om\inst{2},
        Panagiotis Papapetrou\inst{2}}
\institute{Finnish Institute of Occupational Health, PO Box 40, FI-00251 Helsinki, Finland. E-mail: \{andreas.henelius, kai.puolamaki\}@ttl.fi
  \and
Department of Computer and Systems Sciences, Stockholm University, PO Box 7003, SE-164 07 Kista, Sweden. E-mail: \{henrik.bostrom, panagiotis\}@dsv.su.se}%

\maketitle

\begin{abstract}
Clustering is a widely used unsupervised learning method for finding structure in the data. However, the resulting clusters are typically presented without any guarantees on their robustness; slightly changing the used data sample or re-running a clustering algorithm involving some stochastic component may lead to completely different clusters. There is, hence, a need for techniques that can quantify the instability of the generated clusters. In this study, we propose a technique for quantifying the instability of a clustering solution and for finding robust clusters, termed \emph{core clusters}, which correspond to clusters where the co-occurrence probability of each data item within a cluster is at least $1 - \alpha$.  We demonstrate how solving the core clustering problem is linked to finding the largest maximal cliques in a graph. We show that the method can be used with both clustering and classification algorithms. The proposed method is tested on both simulated and real datasets. The results show that the obtained clusters indeed meet the guarantees on robustness.
\end{abstract}

\begin{keywords}
Clustering
\end{keywords}

\setcounter{footnote}{0}

\section{Introduction}
Clustering is a fundamental and widely used unsupervised learning
technique for extracting and understanding structural properties of
datasets \citep{hartigan75}. The input to a clustering algorithm is a
set of objects, often represented in the $d$-dimensional space
$\mathbb{R}^d$ with a distance or similarity measure $D$. Clustering
can also be performed in more abstract spaces, e.g., clustering of
strings or graphs. The objective is to assign objects into groups, so
that similar objects are placed in the same group and dissimilar
objects are placed in different groups. The potential of clustering
algorithms to reveal the underlying structure in any given dataset has
been exploited in a wide variety of domains, such as image processing,
bioinformatics, geoscience, and retail marketing.

The quality of clustering is affected by many factors, such as
noise in the data or the suitability of the assumptions of the
clustering algorithm. Such factors can, in general, prevent any
notion of stability in the clustering result. In particular, two
data items that are assigned to the same cluster by a clustering
algorithm may end up in different clusters if the algorithm is
re-run. This variation in the clustering of a dataset can be
seen as stemming from both systematic and random errors. The
systematic error is due to the stochastic nature of the used
clustering algorithm, i.e., re-clustering of dataset using some
algorithms leads to slightly different solutions due to, e.g.,
different initial conditions. The random error in the clustering
is due to the variation in the used data sample, i.e., choosing a
slightly different data sample will likely change the clustering
solution too. These two sources of error in the clustering
solution are not easily separable. Nonetheless, knowledge of the
stability of the clustering result is important for the proper
interpretation of the structure of the data.

In this paper, we introduce a method for assessing the stability of a
clustering result in terms of \emph{co-occurrence probability}, i.e.,
the probability that data items co-occur in the same
cluster. Specifically, we address the following problem: given a
dataset and a clustering algorithm, we want to identify the largest
set of items within each cluster in the dataset, so that the
co-occurrence probability of the items in these within-cluster sets is
guaranteed to be at least $1-\alpha$, where $\alpha \in [0,1]$. Each
such set is called a \emph{core cluster} and the addressed problem is
referred to as the \emph{core clustering problem}. The core clustering
method hence reflects the combined effect of the systematic and random
errors in the clustering of a dataset.

\smallskip
\noindent
\textbf{Example.} To better motivate the idea of core clustering,
consider the following example, using a synthetic dataset of $n$
points generated by a mixture of three Gaussians with unit
variance. Suppose that this dataset represents data from patients,
each suffering from one out of three possible conditions, and that the
condition of individual patients is unknown by a clinician studying
the data.

The clinician is interested in grouping the patients in order to
investigate patterns in the data, such as which patients are similar
to each other and thus more likely to have the same disease. The
clinician employs the k-means++ algorithm \citep{vassilvitskii:2007:a}
to partition the data. However, although clustering the data yields a
result, it is difficult to interpret the validity of the result, i.e.,
how good the clustering result is. Do the patients in the clusters
really belong to these groups?

Our goal with the proposed core clustering algorithm is to answer this
question by providing a stable clustering result that also gives a
statistical guarantee that the data points (in our example patients)
in a given cluster co-occur with a probability of at least $1 -
\alpha$, where $\alpha \in [0,1]$ is a given constant.

One way of evaluating the validity of the clustering solution in this
example could be by choosing a slightly different sample of patients
from the same population. Repeatedly clustering such samples would
allow us to track how often a certain patient is placed in a
particular group. The intuition is that if some patients are often
placed together in the same particular group, it indicates that these
patients are strong members of that group. In contrast, it is
difficult to label patients that shift from one group to another
during different clusterings.

We will now extend this idea of clustering slightly overlapping
samples of data into a method that allows us to reach our goal to
provide a statistical guarantee on the co-occurrence of data points in
clusters. To this end, we employ the following scheme: First, we run
k-means++ and record the original clustering. Next, we take a
bootstrap sample from the original dataset and re-run k-means++ for
that sample. This step is repeated, e.g., $1000$ times, each time
using a new bootstrap sample. The co-occurrence probability of each
pair of points is determined as the fraction of times the points have
co-occurred in the same cluster during the process. Hence, we can now
identify the stable \emph{core clusters} as the largest set of points
within each original cluster, where the points co-occur with a
probability of at least $1-\alpha$. The number of core clusters is
equal to the number of original clusters, and the method of core
clustering can be seen as refining the output from a clustering
algorithm, such as k-means++ used here, finding points within clusters
having a strong cluster membership and by excluding certain unstable
points from the clustering solution. The clustering of the dataset
discussed in our example is shown in Figure~\ref{fig:ex:coreclusters},
where the clusters obtained from the original clustering are shown
using different plot symbols (circles, triangles, and rectangles). The
points belonging to the core clusters are shown as filled points, and
these points always co-occur in the same cluster with a probability of
at least $1 - \alpha$. The unfilled points in the figure do not belong
to the core clusters and are termed \emph{weak points}, since their
cluster membership is weak as it changes from one clustering to
another in more than $\alpha$ percent of the cases.

The fact that the points in the core clusters are guaranteed to
co-occur with a certain probability is useful in exploratory data
mining, and in the above example they would allow the clinician to
cluster patients using different confidence levels. This would
correspond to testing the hypothesis that two data items belong to the
same cluster with a given probability.

The core clusters also allow the behaviour of the clustering algorithm
to be evaluated. If the core clusters are very small, it means that
the clustering is unstable and the results are unreliable. Core
clustering hence offers direct insight into the suitability of a
clustering scheme, reflecting the interaction between the clustering
algorithm, its settings and the data.

Since clustering is a fundamental tool in exploratory data analysis,
the concept of core clustering could be useful in domains typically
using clustering, e.g., in bioinformatics.

\smallskip
\noindent
\textbf{Related work.} Next, we summarise some related work and
position core clustering with respect to existing
literature. Clustering can be performed in a variety of ways and a
plethora of clustering methods has been proposed. However, a thorough
review of clustering methods is outside the scope of this paper. The
reader can refer to several available surveys (\cite{SurveyKogan,
  Jain1999, Guinepain2005}). Clustering can be either \emph{hard},
which is the focus of this work, where each data object strictly
belongs to a single cluster, or \emph{soft}, where each data object
may belong to more than one partition. Some widely used algorithms for
hard clustering include k-means and variants of it
\citep{James2013,Pelleg2000}, hierarchical clustering, and
density-based clustering. For the case of soft clustering, also known
as fuzzy clustering \citep{fuzzysurvey}, several approaches exist in
the literature, including standard algorithms such as fuzzy c-means
and extensions \citep{Bezdek1981, hoppner99,Wu03} and the
Gustafson-Kessel algorithm \citep{Gustafson79}. Other adaptations of
these algorithms include the probabilistic c-means
\citep{Krishnapuram1993} and the probabilistic fuzzy c-means
\citep{Tushir2010} algorithms, which are motivated by the fact that
both probabilistic and membership coefficients are necessary to
perform clustering, respectively to reduce outlier sensitivity and to
assign data objects to clusters. Thus, both relative and absolute
resemblance to cluster centres are taken into account. Similar
probabilistic clustering approaches are based on the idea that the
underlying data objects are generated by $k$ different probability
distributions
\citep{Smyth96clusteringusing, mclachlan98, Taskar2001}. The $k$ models
correspond to the clusters of interest and the task is to discover
these models. However, none of these clustering algorithms can provide
statistical guarantees on the result.

This work can be compared with previous work on assessing the significance
of clustering solutions in \citet{Ojala2010ICDM} and
\citet{Ojala2011}, which however tries to answer the question whether
the clustering solution as a whole is statistically valid by
constructing elaborate null models for the data, as opposed to
summarising individual co-occurrence patterns as done in this paper.

In the field of robust clustering, the goal is to perform clustering
without the result being overly affected by outliers in the data. See,
e.g., \cite{garcia:2010:a} for an overview. The problem studied in
this paper is different from robust clustering, since in our case the
focus is not on removing outlier data points. In contrast, unstable
weak points are identified, which are typically not outliers but
instead they are points located on the border of different
clusters. Hence, the two problems are rather complementary, since core
clustering can be used in conjunction with robust clustering methods,
as demonstrated in this paper.

Another important problem is the evaluation of clustering stability,
which is a model selection problem that focuses on discovering the
number of clusters in the data; for a review see
\citet{Luxburg:2010:a}. In this paper, we do not consider clustering
stability with respect to different choices of the number of
clusters. Instead, we study the problem of finding clusters that are
stable upon clustering of resampled versions of the original dataset.

Pairwise comparison of the cluster membership of different points is a
natural and popular method used extensively when comparing the
similarity of two clustering solutions (e.g., \citet{rand:1971:a,
  fowlkes:1983:a}). For reviews of similarity measures related to the
pairwise comparison of clusterings see, e.g., \citet{wagner:2007:a,
  pfitzner:2009:a}.

Core clustering is a variant of clustering aggregation (consensus
clustering), see, e.g., \citet{ghosh:2013:a}. Given a set of different
clustering results for some data, the goal of clustering aggregation
is to find the clustering with the highest mutual agreement between
all of the clusterings in the set. Clustering aggregation can be
performed using for instance probabilistic models (e.g.,
\citet{wang:2011:a, topchy:2005:a}), methods based on on pairwise
similarity (e.g., \citet{gionis:2007:a, nguyen:2007:a}), or methods
considering the pairwise co-occurrences of points (e.g.,
\citet{fred:2005:a}).

Methods in which the co-occurrences of points are used to find stable
clustering solutions have been presented in fields such as
neuroinformatics \citep{bellec:2010:a}, bioinformatics
\citep{kellam:2001:a} and statistics \citep{steinley:2008:a}.

The methods for finding stable clusters by considering co-occurrences
are essentially variations of the evidence accumulation algorithm
proposed by \citet{fred:2001:a, fred:2002:a} and
\citet{fred:2005:a}. In this method clusters are formed by clustering
the co-occurrence matrix of items obtained from different clusterings
of the data.

The method of core clustering also belongs to this category of methods
and the core clusters are formed such that the agreement between all
clusterings is at least $1-\alpha$. Bootstrapping is used to form the
co-occurrence matrix as also done by \citet{bellec:2010:a}. In the
present paper we show the utility of the core clustering method for
explorative data analysis using several clustering and classification
algorithms.


\begin{figure}[!ht]
\centering
    \includegraphics[width = \columnwidth]{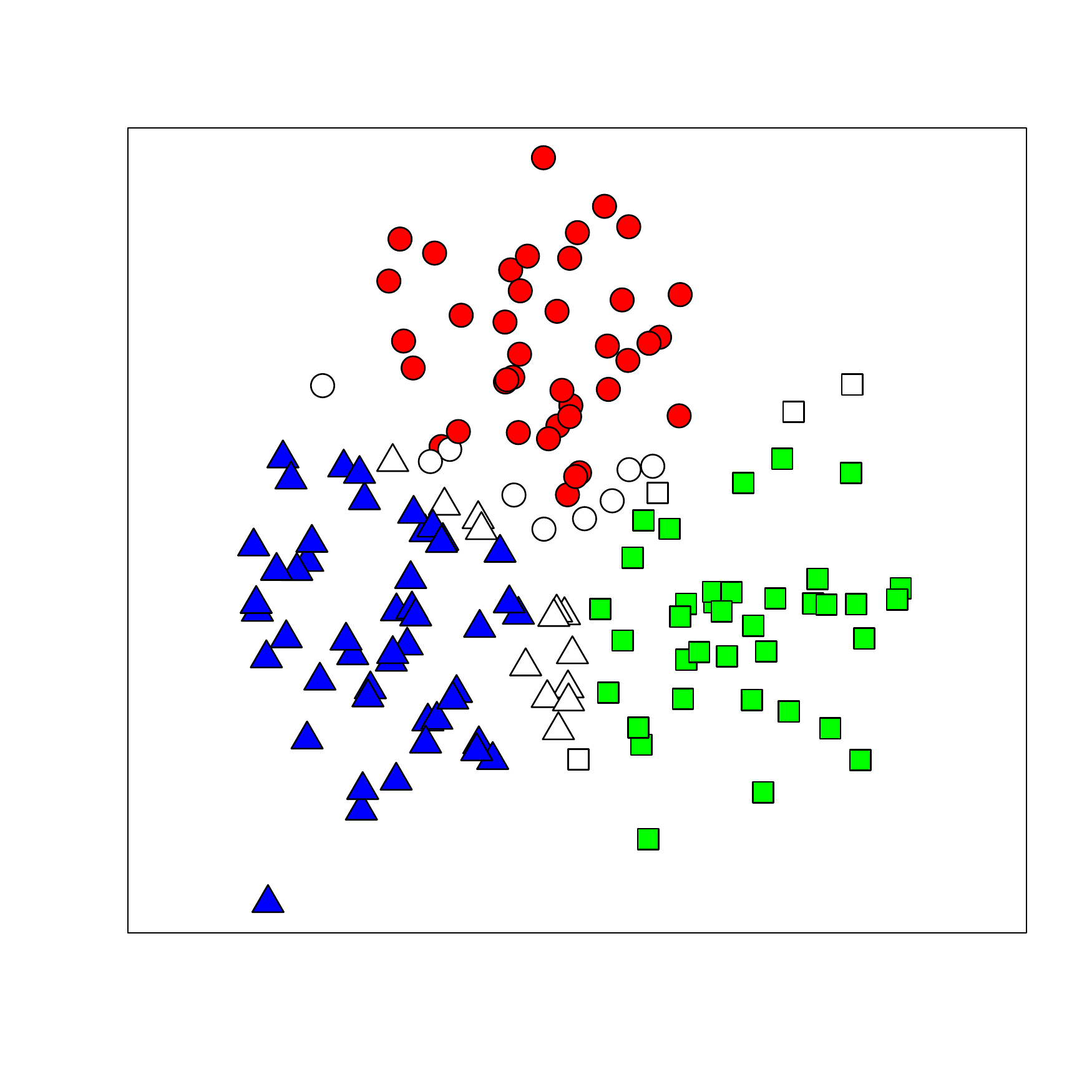}
    \caption{Core clustering of the synthetic dataset using k-means++,
      computed with Algorithm \algcoocbs (Fig.~\ref{alg:cooc:bs})
      using 1000 iterations and $\alpha=0.1$. The original clustering
      is shown using different symbols for each cluster. Core clusters
      are shown with filled symbols and weak points are unfilled.}
    \label{fig:ex:coreclusters}
\end{figure}

\smallskip
\noindent
\textbf{Contributions.} The main contributions of this paper can be
summarised as follows:
\begin{itemize}
\item We present a method for assessing the stability of clusters in
  terms of {\it core clusters} and we define and analyse theoretically
  the corresponding {\it core clustering problem}.
\item An algorithm for solving the core clustering problem is
  proposed.
\item It is demonstrated that bootstrapping provides a feasible method for evaluating clustering stability.
\item Through an empirical evaluation on both synthetic and real-world
  datasets, and for both clustering and classification problems, the
  proposed algorithm is shown to support the interpretation of the
  structure of the datasets.
\end{itemize}

In the next section, we formalise the core clustering problem, and in
Sec.~\ref{methods} the core clustering algorithm is described and
analysed theoretically. In Sec.~\ref{experiments}, the setup and
results of the empirical investigation are presented. The results are
discussed in Sec.~\ref{discussion}, while the conclusions are
summarised and pointers to future work are provided in Sec.~\ref{cr}.

\section{Problem Setting}
\label{problem}

\subsection{Definitions}
Let $D \in X^n$ be a dataset of $n$ items defined in some space
$X$. We denote the $i$th data item in $D$ as $D[i]$, and assume that
these data items have been drawn i.i.d. from an unknown distribution
$F$ over $X$.

We now consider the problem of clustering the data items in $D$. In
other words, our aim is to assign a cluster label to each item in
$D$. In a general case, we can express the process of assigning
cluster (or class) labels to data items using a \emph{clustering
  function}.

\begin{definition}\emph{Clustering function.}
  \label{def:clust:functional}
Given a dataset of $n$ items $D\in X^n$, a clustering function $f_D :
X \mapsto \mathbb{N}$ is a partial function on $X$, having at least
$D$ in its support, which outputs a cluster index for a data item $x$
in its support; the cluster index is denoted as $f_D(x)$.
\end{definition}
In the case of the k-means++ algorithm, the function $f_D$ simply
assigns each item with the index of the closest cluster centroid. For
some clustering algorithms $f_D$ is defined only for items in $D$,
since there may be no natural way to assign a previously unseen data
item into a cluster.

A clustering function can also be used in a supervised setting, where
each data item in $D$ is associated with a class label, and we can use
$D$ to train a classifier function, such as a random forest or a
support vector machine (SVM). In this case, the classifiers predict
the class label given a data item, and hence, we can view the
classifier function as a clustering function $f_D$. As a result, even
though there are differences in terms of operation between an
unsupervised clustering algorithm and a supervised classifier, we can
treat both cases as an assignment task, where each data item in $X$ is
assigned with either a cluster index or a class label.


We can now define the \emph{co-occurrence probability} for two data
items as follows:
\begin{definition}\emph{Co-occurrence probability.}
\label{def:cooc}
Given two data items $x$ and $y$ in $D \subseteq X$, the
\emph{co-occurrence probability} of $x$ and $y$ is the probability
that they co-occur in the same cluster for a data set consisting of
$x,y$ as well as $n-2$ data items drawn at random from $X$ according
to distribution $F$.
\end{definition}
The co-occurrence probability is affected both the systematic and
random error in the clustering due to the randomness in the clustering
algorithm and the variation in the data, as discussed above, and these
effects cannot readily be separated.

Next, we proceed to define the \emph{core clusters}. Clustering the
dataset $D$ into $k$ clusters using $f_D$ gives a disjoint partition
of $D$ into $k$ distinct sets (clusters) $S_i$: $D = \bigcup_i S_i, \,
S_i \bigcap S_j = \emptyset, \, i\neq j$. The set of items in the
clusters $S_i$ can now be refined as follows based on the
co-occurrence probabilities to give the core clusters:

\begin{definition}\emph{Core cluster of $S$.}
\label{def:corecluster}
  Given a set of items $S$ and a constant $\alpha \in \left[0,
    1\right]$, the subset $C \subseteq S$ is a \emph{core cluster} of
  $S$, if $C$ is the largest set of items in $S$, where the
  co-occurrence probability of each pair of items $x, y \in C$ is at
  least $1 - \alpha$.
\end{definition}
In other words, we require that any two points that belong to the same
core cluster co-occur in that cluster with a probability of at least
$1 - \alpha$.

The determination of which data items that belong to the core clusters
hence depends only on the co-occurrence probabilities. Given the
co-occurrence probabilities of data items we can now proceed as
follows to obtain the core clusters. Let $\{S_1, \ldots, S_k\}$ be the
original partition of dataset $D$ into a set of $k$ clusters. The
co-occurrences between all pairs of data items in each cluster $S_i$
can be represented by a graph, where an edge is defined between items
$i$ and $j$, if and only if, their co-occurrence probability in $S_i$
is at least $1 - \alpha$. Within each cluster $S_i$, we proceed with
finding the largest maximal clique $C_i \subseteq S_i$, resulting in
$k$ cliques. For these cliques, every two distinct data items are
adjacent, and the condition of Definition~\ref{def:corecluster} holds;
the $k$ core clusters for $D$ are given by the $k$ largest maximal
cliques $C_i$.

The core clusters, hence, consist of those points in the ``cores'' of
the original clusters, for which the cluster membership is
strong. Core clustering can be thought of as a method of refining the
original clustering, while the interpretation of the clusters is
unmodified.

We also define \emph{weak points} as follows.
\begin{definition}\emph{Weak points.}
  Given a dataset $D$ with $k$ core clusters $\{C_1,\ldots, C_k$\}, the
  set $W$ of weak points is $W = D \setminus \bigcup_i C_i$.
\end{definition}
The weak points are hence those data items that do not belong to the core clusters.

Finally, using the above definitions, we can formulate the main problem studied in this paper.
\begin{problem}\label{prob:1} \emph{Core clustering.}
Given a dataset $D$ of $n$ items, a clustering function $f_D$, and a
constant $\alpha \in \left[0, 1\right]$, we want to find the largest
sets of items within each cluster of $D$ given by $f_D(D)$, where all
pairs of items co-occur with a probability of at least $1 - \alpha$ in
the same cluster.
\end{problem}

Our main objective is therefore, given a dataset and a result of a
clustering algorithm, to refine or enhance this result by identifying
those data points within the clusters for which we can provide
probabilistic guarantees that the data items in the same cluster
co-occur with a probability of at least $1 - \alpha$.

\section{Methods}
\label{methods}
In this section, we present our algorithm for finding core clusters
and prove that the core clustering problem is \NPhard.

\subsection{Algorithm for Determining Core Clusters}
As noted above, the determination of core clusters only depends on
calculating the co-occurrence probabilities of the data items in the
dataset using a given clustering function.

Depending on the knowledge of the distribution $F$ of the data points
in the dataset, we can different strategies for calculating the
co-occurrence probability. If the distribution $F$ of the data items
is known, we can use this distribution directly. However, in most
cases the true distribution is unknown and we resort to bootstrapping
the original dataset. Below, we present algorithms for both of these
scenarios.

The general algorithm for determining core clusters is presented in
detail in Algorithm \algcoreclustering
(Fig.~\ref{alg:coreclustering}). The main steps of the algorithm are
as follows:

 \begin{enumerate}
\item Given a dataset $D$, determine an initial clustering using a
  suitable clustering function (line 1 in
  Fig.~\ref{alg:coreclustering}). This could, for example, correspond
  to running k-means++ on $D$. The role of the initial clustering is
  discussed below in more detail.
\item Calculate the co-occurrence probabilities for all pairs of data
  items in each of the clusters (line 5). This step can be achieved in
  multiple ways, and below we present two methods.
\item The process of finding the core clusters as the largest set of
  items within the original cluster where pairs of items co-occur at a
  given level $1-\alpha$ naturally leads to a clique-finding problem
  when we extend the pairwise co-occurrences of items to comprise the
  set of all items co-occurring at the given level. Hence, as
  discussed above, to find the core clusters of $D$ we find the
  largest maximal cliques within each of the original clusters (line
  7). There are many choices for an algorithm to find the largest
  cliques, see e.g., \cite{bron:1973:a}.
\end{enumerate}

\begin{figure}[!t]
\begin{algorithm}[H]
\DontPrintSemicolon
\SetKwFunction{CoreClusters}{CoreClustering}
\SetKwFunction{ClusterFunction}{$f_D$}
\SetKwFunction{CoOccurrenceFunction}{CoOccurrenceProbabilities}
\SetKwFunction{DiscretizeFunction}{Discretize}
\SetKwFunction{FLMC}{FindLargestMaximalClique}
\SetKwInOut{Input}{input}
\SetKwInOut{Output}{output}
\Input{
\begin{itemize}\itemsep0em
\item[--] dataset $D$ with $n$ data items,\\
\item[--] confidence threshold $\alpha$,\\
\item[--] a clustering function $f_D$,\\
\item[--] a function \CoOccurrenceFunction to calculate the co-occurrence probabilities of data items,\\
\item[--] \FLMC a function that finds the largest maximal clique in a given graph
\end{itemize}
} \Output{a set of core clusters} \BlankLine $K \leftarrow f_D(D)$
\tcc{Initial clustering. $K$ is an $n$-dimensional vector with cluster
  indices} Let $I$ be a set of unique indices in $K$\; $C \leftarrow
\emptyset$ \; \For{$i$ {\rm in} $I$}{ \tcc{For the subgraph consisting
    of points belonging to the original cluster $i$, determine the
    largest maximal clique} Compute the co-occurrence probabilities
  between all pairs of data items with cluster index $i$ using
  \CoOccurrenceFunction\; Let $G$ be an undirected graph of nodes
  having a cluster index $i$ such that there exist an edge between two
  nodes iff the co-occurrence probability is at least $1-\alpha$\; $C
  \leftarrow C \cup \{\FLMC \left(G\right)\}$ }

\Return{C} \tcc{Return the set of core clusters}
\caption{CoreClustering}
\end{algorithm}
\caption{ Algorithm for finding the core clusters of a dataset.}
\label{alg:coreclustering}
\end{figure}


Next, we present two algorithms for determining the co-occurrence
probability (step 2 of Algorithm \algcoreclustering); one direct,
na\"ive approach and one based on bootstrapping. For both algorithms
we also derive the time complexity of the algorithms and the required
number of samples needed to reach a sufficient co-occurrence
probability accuracy.

\subsubsection{Sampling $F$}
\label{sec:methods:F}
If the data distribution $F$ is known, e.g., as in the example
presented in the Introduction, the co-occurrence probabilities can be
computed to the desired accuracy using Algorithm \algcoocpair
(Fig.~\ref{alg:cooc:pair}) by sampling sets of $n-2$ data points from
$F$ and applying the clustering function $f_D$ to the set consisting
of $x$, $y$, and the $n-2$ sampled data points. The co-occurrence
probability is then the fraction of samples in which $x$ and $y$ are
assigned to the same cluster.

In order to derive a measure for the accuracy of the co-occurrence
probability, we proceed as follows. The standard deviation $\sigma$ of
the co-occurrence probability is given by the binomial distribution,
\begin{equation}
\label{eq:sigma}
\sigma = \sqrt{ \frac{p \left( 1 - p\right)}{N}},
\end{equation}
where $p$ is the co-occurrence probability and $N$ is the number of
pairs sampled.  If we use a co-occurrence probability of $p=1-\alpha =
0.9$ and want to calculate this to a one standard deviation accuracy
of $\sigma=1\%$ at the $p=1-\alpha$ threshold, we need at least
\begin{displaymath}
N = \frac{\alpha\cdot (1-\alpha)}{\sigma^2} = 900
\end{displaymath}
samples where the pair of points $x$ and $y$ co-occur. Using $N =
1000$ samples hence provides good accuracy. However, as each sample
contains only one pair of points of interest, i.e, the pair $x$ and
$y$, a total of $N \cdot \binom{n}{2}$ samples is required.

The time complexity of Algorithm \algcoocpair is $\mathcal{O}(mTn^2)$,
where $m$ is the number of samples, $T$ is the time complexity of
training the clustering algorithm, i.e., finding the clustering
function $f_D$, and $n$ is the size of the dataset.

In practice, one seldom has any knowledge of the true underlying data
distribution $F$; in such case one must sample from the original
dataset. Also, the computational complexity of directly sampling from
$F$ is high, which means that this approach, is not feasible to use,
although it is a direct implementation of
Definition~\ref{def:cooc}. Instead, a natural and efficient choice for
acquiring the samples needed for calculating the co-occurrence
probabilities is to use the bootstrap approximation.

\subsubsection{Bootstrapping}
\label{sec:methods:bs}
In order to calculate the co-occurrence probabilities using the
non-parametric bootstrap approximation, $n$ data points are sampled
with replacement from the dataset $D$ and the co-occurrence
probabilities are calculated using Algorithm \algcoocbs
(Fig.~\ref{alg:cooc:bs}). The bootstrap procedure allows estimation of
the sampling distribution of a parameter, in this case the
co-occurrence probabilities, and simulates the effect of drawing new
samples from the population.

Different bootstrapping schemes can be used when calculating the
co-occurrence probabilities. For instance, the clustering function
could be constructed using out-of-bag samples. Using the bootstrap
approach should provide a good approximation of the data distribution
in most situations. The bootstrap approximation may fail if the
clustering algorithm is sensitive to duplicated data points that
necessarily appear in the bootstrap samples. One should also notice
that the above discussed method of sampling directly from $F$ is, in
fact, a variant of parametric bootstrapping, where the data generating
process $F$ is known.

The time complexity of Algorithm \algcoocbs using $m$ bootstrap
samples is $\mathcal{O}\left(m(T+n^2) \right)$, where $T$ is the time
complexity of the used clustering algorithm and $n$ is the size of the
dataset.

We will now determine the number of bootstrap samples required for a
given co-occurrence probability accuracy. The probability $p_s$ that a
randomly chosen item from a dataset of size $n$ appears in a bootstrap
sample is
 \begin{displaymath}
 p_s = 1- \left( 1- \frac{1}{n} \right)^n \text{ and } \lim_{n\rightarrow \infty} \left( p_s\right) = 1 - \frac{1}{e} .
 \end{displaymath}
Hence, the probability that a randomly chosen pair of points appear in
the bootstrap sample is therefore given by $p_s^2 \approx 0.4$ meaning
that each bootstrap sample on average covers \unit[40]{\%} of the
pairs in the dataset. The standard deviation $\sigma$ of the
co-occurrence probability when bootstrapping is now given by
Equation~\ref{eq:sigma} setting $N = n \cdot p_s^2$. Using $n = 1000$
bootstrap samples and a co-occurrence probability of $p = 0.9$ here
thus gives us a one standard deviation accuracy of \unit[1.5]{\%}.

Both of the above mentioned schemes for calculating the co-occurrence
probabilities are usable in a scenario where there is no natural way
to assign a cluster index to a previously unseen data item.  As will
be shown in the experimental evaluation, using the bootstrap agrees
well with using the true underlying distribution $F$, when $F$ is
known.

The number of samples in Algorithm~\algcoocpair needed to reach a
co-occurrence probability accuracy grows as the number of samples in
the dataset increases. In contrast, the number of samples needed for a
given accuracy using the bootstrap in Algorithm~\algcoocbs remains
constant regardless of the size of the dataset. Algorithm \algcoocbs
(non-parametric bootstrap approximation) is computationally much more
efficient than Algorithm \algcoocpair (direct estimation).

The non-parametric bootstrapping method is hence the preferred method
to be used in the calculation of the co-occurrence probabilities.

\subsubsection{Initial clustering}
The core clusters are always determined with respect to an initial
clustering (line 1 of Algorithm \algcoreclustering
(Fig.~\ref{alg:coreclustering})). By definition, the core clusters are
constructed so that the agreement between clusterings of different
samples of the data must, for the core clusters, overlap in at least
$1- \alpha$ percent of the samples. This means that the data items in
the core clusters of any given sample will overlap to $1-\alpha$
percent with any other sample. Hence, the choice of initial reference
clustering from among bootstrap samples of the dataset is in practice
arbitrary. We therefore suggest using any clustering of the full
original dataset as the reference clustering with respect to which the
core clusters are determined.

\begin{figure}[!t]
\begin{algorithm}[H]
\DontPrintSemicolon
\SetKwFunction{KwRandomize}{CoOccurrenceProbabilitiesDirect}
\SetKwInOut{Input}{input}\SetKwInOut{Output}{output}
\Input{
\begin{itemize}
\item[--] a data matrix $D$ with $n$ rows,
\item[--] a function $r$ that produces $n-2$ random data items sampled from $F$ (or its approximation),
\item[--] a clustering function $f_Z$ computed for any dataset $Z\in X^n$,
\item[--] the number of random samples $m$
\end{itemize}
       }
\Output{$P$: an $n\times n$ matrix where $P[i,j]$ gives the co-occurrence probability between data items $D[i]$ and $D[j]$}

\BlankLine
Let $A$ be an $n\times n$ matrix with all entries initialised to $0$\;
\For{$i$ {\rm in} $1$ {\rm to} $n-1$} {
\For{$j$ {\rm in} $i+1$ {\rm to} $n$} {
\For{$k$ {\rm in} $1$ {\rm to} $m$} {
Let $R$ be a vector of $D[i]$, $D[j]$, and a set of $n-2$ data items drawn from $r$\;
Add one to $A[i,j]$ and $A[j,i]$ if $D[i]$ and $D[j]$ are assigned to the same cluster in $f_R(R)$\tcc{Find a clustering function for data set $R$ and check if data items $i$ and $j$ are in the same cluster}
}
}
}
\Return{$A/m$}
\caption{CoOccurrenceProbabilitiesDirect}
\end{algorithm}
\caption{Algorithm for finding the co-occurrence probabilities of a dataset $D$.}
\label{alg:cooc:pair}
\end{figure}

\begin{figure}[!t]
\begin{algorithm}[H]
\DontPrintSemicolon
\SetKwFunction{KwRandomize}{CoOccurrenceProbabilitiesBootstrap}
\SetKwInOut{Input}{input}\SetKwInOut{Output}{output}
  \Input{
  \begin{itemize}
  \item[--]  a data matrix $D$ with $n$ rows, a clustering
  \item[--] function $f_Z$
  \item[--]the number of random samples $m$
\end{itemize}
  }
\Output{$P$: an $n\times n$ matrix where $P[i,j]$ gives the co-occurrence probability between data items $D[i]$ and $D[j]$}
\BlankLine
\tcc{$A$ and $B$ are initialised with non-zero values to insert a
  slight prior towards a flat distribution and to avoid problems with
  divisions by zero.  $A[i,j]$ counts how many times items $i$ and $j$
  occur in the same cluster and $B[i,j]$ counts how many times both
  $i$ and $j$ occur in the bootstrap sample.}
Let $A$ be an $n\times n$ matrix with all entries initialised to $1/n$\;
Let $B$ be an $n\times n$ matrix with all entries initialised to $1$\;
Let the diagonals of $A$ and $B$ be unity\;
\For{$k$ {\rm in} $1$ {\rm to} $m$}{
  Let $\overline I$ be a vector of $n$ integers sampled uniformly in random with replacement from $\{1,\ldots,n\}$ and let $Z$ be a set of $n$ data items such that $Z[i]=D[\overline I[i]]$\;
  Let $\overline c$ be a vector of cluster indices output by $f_Z(Z)$\;
  Let $\overline u$ be a vector of indices of unique values in $\overline I$\;
  \If{$|\overline u|>1$}{
    \For{$i$ {\rm in} $1$ {\rm to} $|\overline u|-1$}{
      \For{$j$ {\rm in} $i+1$ {\rm to} $|\overline u|$}{
        Let $a$ be $\overline I[\overline u[i]]$\;
        Let $b$ be $\overline I[\overline u[j]]$\;
        \tcc{increase counter as $a$ and $b$ are both present in the sample}
        Increase $B[a,b]$ and $B[b,a]$ by one\;
        \tcc{Increase counter if $a$ and $b$ are assigned to the same cluster.}
        \If{$\overline c[\overline u[i]]=\overline c[\overline u[j]]$}{
          Increase $A[a,b]$ and $A[b,a]$ by one\;
        }
      }
    }
  }
}
Let $P$ be an $n\times n$ matrix where each entry satisfies $P[i,j]=A[i,j]/B[i,j]$\ for all $i$ and $j$\;
\Return{$P$} \BlankLine
\caption{CoOccurrenceProbabilitiesBootstrap}
\end{algorithm}
\caption{Algorithm for finding the co-occurrence
  probabilities for all data items using the bootstrap approximation.}
  \label{alg:cooc:bs}
\vspace*{10mm}
\end{figure}

\subsection{Complexity of finding core clusters}
In this section we show that the problem of determining the core
clusters is \NPhard by proving the following theorem.

\begin{theorem}
Finding the core clusters is \NPhard.
\end{theorem}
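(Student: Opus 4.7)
The natural route is a polynomial-time reduction from the \textbf{Maximum Clique} problem, which is well known to be \NPhard. The key observation, already highlighted in the paper when introducing the algorithm, is that once the pairwise co-occurrence probabilities are fixed the core clustering problem restricted to a single original cluster $S_i$ is exactly the problem of finding a largest maximal clique in the graph $G_i$ whose vertex set is $S_i$ and whose edges join the pairs $(x,y)$ with co-occurrence probability at least $1-\alpha$. Thus if we can realise an arbitrary graph as the co-occurrence graph of some core clustering instance, the reduction is complete.

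The plan is as follows. Given a Maximum Clique instance $G=(V,E)$ with $|V|=n$, build a dataset $D$ containing one item $x_v$ per vertex $v\in V$, fix any $\alpha\in(0,1)$, and construct a data distribution $F$ and a clustering function $f_D$ (together with the family $f_Z$ for sampled datasets $Z$) so that the initial clustering places every $x_v$ into a single cluster $S_1$ and so that the co-occurrence probability of $x_u$ and $x_v$ in $S_1$ equals $1$ if $\{u,v\}\in E$ and equals $0$ otherwise. Then the unique edge set of the co-occurrence graph coincides with $E$, so any algorithm for Problem~\ref{prob:1} returns the largest clique of $G$. Since the construction is polynomial in $n$ and the size of $G$, this gives a Karp reduction from Maximum Clique to core clustering and proves \NP-hardness.

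The main obstacle is Step 2, i.e.\ exhibiting a clustering function whose co-occurrence probabilities realise a prescribed $0/1$ adjacency matrix. One clean way to carry this out is to let $f_Z(x_v)$ depend deterministically on the adjacency list of $v$ in $G$: for every sampled subset $Z$, partition the items in $Z$ according to the connected components of the induced subgraph $G[\{v:x_v\in Z\}]$ after restricting to the edges in $E$. Because this assignment is deterministic and is a function of the sampled points only, the co-occurrence probability of $x_u$ and $x_v$ is $1$ when $\{u,v\}\in E$ and $0$ otherwise, as required; moreover $f_D$ fits Definition~\ref{def:clust:functional} since it outputs a single cluster index for each item in its support. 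An easy verification then confirms that a set $C\subseteq\{x_v\}$ is a core cluster of $S_1$ at level $\alpha$ if and only if $\{v:x_v\in C\}$ is a maximum clique of $G$.

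The reduction is clearly polynomial time, and Maximum Clique is \NPhard, so the core clustering problem is \NPhard as claimed. The remaining bookkeeping is routine: one chooses $\alpha$ to separate the probabilities $0$ and $1$ (any $\alpha\in(0,1)$ works), and the distribution $F$ can be taken as the uniform distribution on $\{x_v:v\in V\}$ so that the sampling mechanism in Definitions~\ref{def:cooc} and~\ref{def:corecluster} is well defined.
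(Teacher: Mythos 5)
Your overall strategy is the same as the paper's: reduce from \textsc{Clique} by engineering a clustering function whose pairwise co-occurrence graph on a single initial cluster reproduces the input graph $G$, so that the core cluster is exactly a maximum clique. However, the step where you actually construct such a clustering function fails. A clustering function must output a \emph{partition} of the sampled items, and adjacency in $G$ is not transitive, so you cannot realise an arbitrary $0/1$ adjacency matrix by any deterministic partition. Your specific choice --- clustering each sample by the connected components of the induced subgraph $G[\{v : x_v \in Z\}]$ --- does not give co-occurrence probability $0$ for non-edges: two non-adjacent vertices $u,v$ land in the same cluster whenever the sampled induced subgraph contains a path between them. For instance, if $u$ and $v$ are non-adjacent but share many common neighbours, the probability that at least one such neighbour appears among the $n-2$ draws from $F$ tends to $1$, so their co-occurrence probability can be made arbitrarily close to $1$. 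Consequently the claim that ``any $\alpha\in(0,1)$ works'' is false, the co-occurrence graph is in general a (possibly much denser) supergraph of $G$ determined by connectivity rather than adjacency, and the largest maximal clique of that graph need not correspond to a maximum clique of $G$. The adjacent case is fine (probability $1$), but the non-adjacent case breaks the reduction.

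The paper circumvents exactly this obstacle by making the clustering function \emph{randomised} rather than deterministic: with probability $p = 1/2 - 1/(n(n-1))$ it places all items in one cluster, and otherwise it picks a single uniformly random pair, co-clusters that pair only if it is an edge of $G$, and puts everything else into singletons. With $\alpha = 1/2$ this yields co-occurrence probability $p < 1-\alpha$ for non-edges and $p + (1-p)\cdot 2/(n(n-1)) \ge 1-\alpha$ for edges, so the threshold $1-\alpha$ cleanly separates edges from non-edges without ever needing a single partition to encode the whole (non-transitive) edge relation. To repair your argument you would need to introduce this kind of randomness (the ``systematic error'' of the clustering function) or some other device that decouples the pairwise co-occurrence events; the deterministic connected-components construction cannot be salvaged as written.
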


\begin{proof}
We show that finding the core clusters is \NPhard by a reduction to
the {\sc clique} problem, which is a classic \NPhard problem
\citep{karp:1972:a}. The {\sc clique} problem is defined as follows:
given an undirected graph $G$ with $n\ge 2$ vertexes, find the largest
fully connected subgraph in $G$. Consider the problem of finding a
core cluster from a dataset of $n$ items with parameter $\alpha$ given
by $\alpha=1/2$ and the clustering function $f_D$ constructed as
follows. Assign all data items to the same cluster with a probability
\[
p=1/2-1/(n(n-1)),
\]
i.e., $f_D(x)=1$ for any $x$. Otherwise pick a random pair of items
$(i,j)$: if there is an edge between the items $i$ and $j$ in $G$ then
assign $i$ and $j$ to a cluster of two items, otherwise assign them to
singleton clusters; finally assign the remaining $n-2$ items to
singleton clusters. Assume that the initial clustering was (by chance)
such that all data items were assigned to the same cluster, hence,
there will be one core cluster. Now, if there is no edge in $G$
between a pair of items then the co-occurrence probability will be
\[
p=1/2-1/(n(n-1))<1-\alpha .
\]
If there is an edge in $G$ between the items $i$ and $j$ they will
co-occur in a cluster with a probability of
$p+(1-p)/(\rm{number~of~pairs})= 1/2-1/(n(n-1))+(1/2+1/(n(n-1)))\times
2/(n(n-1))\ge 1-\alpha$.

Hence, the co-occurrence probability between a pair of items is at
least $1-\alpha$ if and only if there is an edge between the
items. Therefore, the solution to the core clustering problem using
Algorithm \algcoreclustering (Fig.~\ref{alg:coreclustering}) gives
under this construction of $f_D$ the largest clique in $G$.
\end{proof}

\clearpage
\section{Experiments}
\label{experiments}

\subsection{Experimental Setup}
In the experiments we investigate the following aspects: (i) the
impact of core clustering and (ii) the differences between
Algorithm \algcoocpair (exact sampling from the known
distribution) and Algorithm \algcoocbs (bootstrap
approximation).

\subsubsection{Clustering Algorithms}
We use both unsupervised learning methods (clustering algorithms) and
supervised learning methods (classifiers) to determine the core
clustering of different datasets. All experiments are performed in R
\citep{R:2014:a}. As clustering algorithms we use one parametric
method, k-means++\citep{vassilvitskii:2007:a}, and one non-parametric
method, hierarchical clustering (hclust), as well as two robust
clustering methods based on trimming as implemented by the
\emph{tclust} R-package \citep{Fritz:2012:a}; trimmed k-means
(tkmeans\footnote{The \texttt{tclust} function with parameters
  \texttt{restr=eigen}, \texttt{restr.fact = 1} and
  \texttt{equal.weights=TRUE}}) \citep{Cuesta:1997:a} and
tclust\footnote{The \texttt{tclust} function with parameters
  \texttt{restr=eigen}, \texttt{restr.fact = 50} and
  \texttt{equal.weights=FALSE}} \citep{GarciaEscudero:2008:a}. In the
robust clustering methods based on trimming, a given fraction of the
most outlying data items are trimmed and are not part of the
clustering solution. The points that are trimmed are hence comparable
to the weak points in the core clustering method. As classifiers we
use Random Forest (RF) and support vector machines (SVM), which both
are among the best-performing classifiers, see e.g.,
\cite{delgado:2014:a}.

\subsubsection{Datasets}
As datasets in the experiments, we use the synthetic data presented
above in the motivating example, five datasets from the UCI Machine
Learning Repository (Iris, Wine Glass, Yeast and Breast Cancer
Wisconsin (BCW)) \citep{Bache:2014:a}, and the 10\% KDD Cup 1999
dataset (KDD). The properties of the datasets are described in
Table~\ref{tab:datasets}. Items with missing values in the datasets
were removed. As noted by \cite{chawla:2013:a}, the three classes
\texttt{normal}, \texttt{neptune} and \texttt{smurf} account for
98.4\% of the KDD dataset, so we selected only these three
classes. Furthermore, we performed variable selection for this
dataset, using a random forest classifier to reduce the number of
variables in the dataset to 5 (variables 5, 2, 24, 30 and 36).

\subsubsection{Experimental Procedure}
Two separate experiments were carried out. In the first experiment, we
obtain the core clustering for the synthetic and KDD dataset using
Algorithm \algcoocpair (Fig.~\ref{alg:cooc:pair}), which assumes
knowledge of the true underlying data distribution.

For the synthetic dataset, we sample directly from the data generating
distribution, a mixture of three Gaussians, which is possible since it
is known. For the KDD dataset, we initially choose a random sample of
200 data items to cluster, corresponding to 0.04\% of the size of the
dataset. Since the KDD dataset is very large in comparison to the
small sample we wish to cluster, we approximate the true distribution
of data items by randomly drawing samples from the entire KDD
dataset. The core clustering of the UCI datasets using the true
distribution is not possible as the datasets are too small.

In the second experiment, we determine the core clustering of all the
datasets in Table~\ref{tab:datasets} using the bootstrap approximation
of Algorithm \algcoocbs (Fig.~\ref{alg:cooc:bs}).

We use $1000$ bootstrap iterations for calculating the co-occurrence
probabilities and a confidence threshold of $\alpha = 0.10$ in all
experiments. It should be noted that the goal here was not to maximise
quality of clustering or classification accuracy, but to demonstrate
the effect of core clustering. All clustering and classification
functions were hence used at their default settings. The k-means++
algorithm was run ten times and the clustering solution with the
smallest within-cluster sum of squares was chosen.

To evaluate the method of core clustering, we use the external
validation metric \emph{purity}
\cite[Sec. 16.3]{manning:2009:a}. Purity ranges from zero to one, with
unity denoting a perfect match to the ground truth class structure.

All source code used for the experiments, including an R package
\verb|corecluster| implementing the core clustering algorithm, is
available for
download\footnote{\url{https://github.com/bwrc/corecluster-r/}}.

\begin{table}[!ht]
\renewcommand{\arraystretch}{1.3}
\caption{Properties of the datasets used in the experiments. The
  figure in parentheses for the KDD dataset is the total size of the
  dataset, from which 200 instances where chosen as the dataset to
  cluster. Dataset size is the number of instances after removal of
  items with missing values.}
\label{tab:datasets}
\centering
\begin{tabular}{lrccc}
 \toprule
 \textbf{dataset} & \textbf{Size} & \textbf{Classes} & \textbf{Attributes} & \textbf{Major class}\\
 \midrule
synthetic                     & 150          & 3  & 2  & 0.33\\
iris                          & 150          & 3  & 4  & 0.33 \\ 
wine                          & 178          & 3  & 13 & 0.40 \\ 
glass                         & 214          & 6  & 9  & 0.36 \\ 
BCW                           & 683          & 2  & 9  & 0.65 \\ 
yeast                         & 1484         & 10 & 8  & 0.31 \\ 
KDD                           & 200 (485269) & 3  & 5  & 0.58 \\ 
\bottomrule\\
\end{tabular}
\end{table}

\subsection{Experimental Results}
\subsubsection{Agreement Between True Distribution and Bootstrap Approximation}
The agreement between core clustering using the true distribution
(Algorithm \algcoocpair) and the bootstrap approximation (Algorithm
\algcoocbs) is shown in Table~\ref{tab:res:confusion}. The table shows
the confusion matrices for the used clustering algorithms when run on
the synthetic and KDD datasets, using Algorithm \algcoocpair and
Algorithm \algcoocbs. The confusion matrices show that most of the
points fall in either the top left or bottom right corner. The results
from the two algorithms agree if the majority of points is located on
the diagonal. For all clustering algorithms and both datasets, with
the exception of hierarchical clustering and trimmed k-means for the
synthetic dataset, only a minor proportion of data items are in
discord between using the true distribution and using the bootstrap
approximation.

\subsubsection{Core Clustering for Datasets}
The experimental results for all datasets using all classifiers are
shown in Table~\ref{tab:res:true} and Table~\ref{tab:res:bs}. The
quality of the clustering is assessed with purity using the known
class labels of the datasets. The table shows the purity using the
original clustering ($\textrm{P}_\textrm{o}$), and using core
clustering ($\textrm{P}_\textrm{c}$). The fraction of weak points
($w$) is also provided. The weak points are ignored when calculating
purity for core clustering. The robust clustering algorithms trimmed
k-means and tclust were set to trim 5\% of the points. In some cases,
the tclust, tkmeans and random forest algorithms failed to cluster a
particular data sample. For Algorithm \algcoocpair and Algorithm
\algcoocbs, a new sample was then obtained, and if a clustering
solution was not obtained in 5 iterations, this iteration was
discarded.

The results in Table~\ref{tab:res:bs} systematically show that core
clustering improves purity in all cases, with the exception of
k-means++ clustering for the KDD dataset, for which the drop in purity
is 0.01.

It can also be seen that the results calculated using the true
distribution shown in Table~\ref{tab:res:true} agree with those
calculated using the bootstrap approximation in
Table~\ref{tab:res:bs}.

\subsubsection{Visualisations of Core Clusterings}
Examples of core clusterings obtained using the bootstrap
approximation of the synthetic, iris, and BCW datasets using different
clustering algorithms are shown in Figure~\ref{fig:res:all}. Trimmed
points, for tclust and tkmeans, are marked with stars.

Figure~\ref{fig:res:hclust:synthetic}, showing core clustering of the
synthetic dataset using hierarchical clustering, can be compared to
the core clustering of this dataset using k-means++ shown in
Figure~\ref{fig:ex:coreclusters}, also calculated using Algorithm
\algcoocbs. It is clear that a large number of points are weak points
when using hierarchical clustering (53\% of the points, as also seen
in Table~\ref{tab:res:bs}). The same applies to core clustering using
tclust (Figure~\ref{fig:res:tclust:synthetic}), which categorises 85\%
of the points as weak points. The interpretation is that hierarchical
clustering and tclust exhibit a high variability in the clustering
outcome on different iterations, which leads to core clusters with
small radii. This means, that these algorithms are not well suited for
the clustering of this dataset. The core clustering using SVM
(Figure~\ref{fig:res:tclust:synthetic}) only categorises 20\% of the
points as weak points, comparable to the results for k-means++.

The core clustering of the iris dataset using k-means
(Figure~\ref{fig:res:kmeans:iris}) clearly shows that the weak points
are located between the two rightmost clusters in the figure. This is
also visible when using hierarchical clustering and trimmed
k-means. For trimmed k-means some peripheral points have also been
excluded.

The core clusterings for the BCW dataset shown in
Figures~\ref{fig:res:kmeans:bcw}, \ref{fig:res:hclust:bcw} and
\ref{fig:res:tclust:bcw} vary depending on the clustering
algorithm. It is clear, that the k-means++ and tclust algorithms are
better suited for clustering this dataset than hierarchical
clustering, which must discard 22\% of the data items as weak points.

\begin{figure*}
\centering
\subfloat[Synthetic -- SVM]{\includegraphics[width = 0.33 \textwidth]{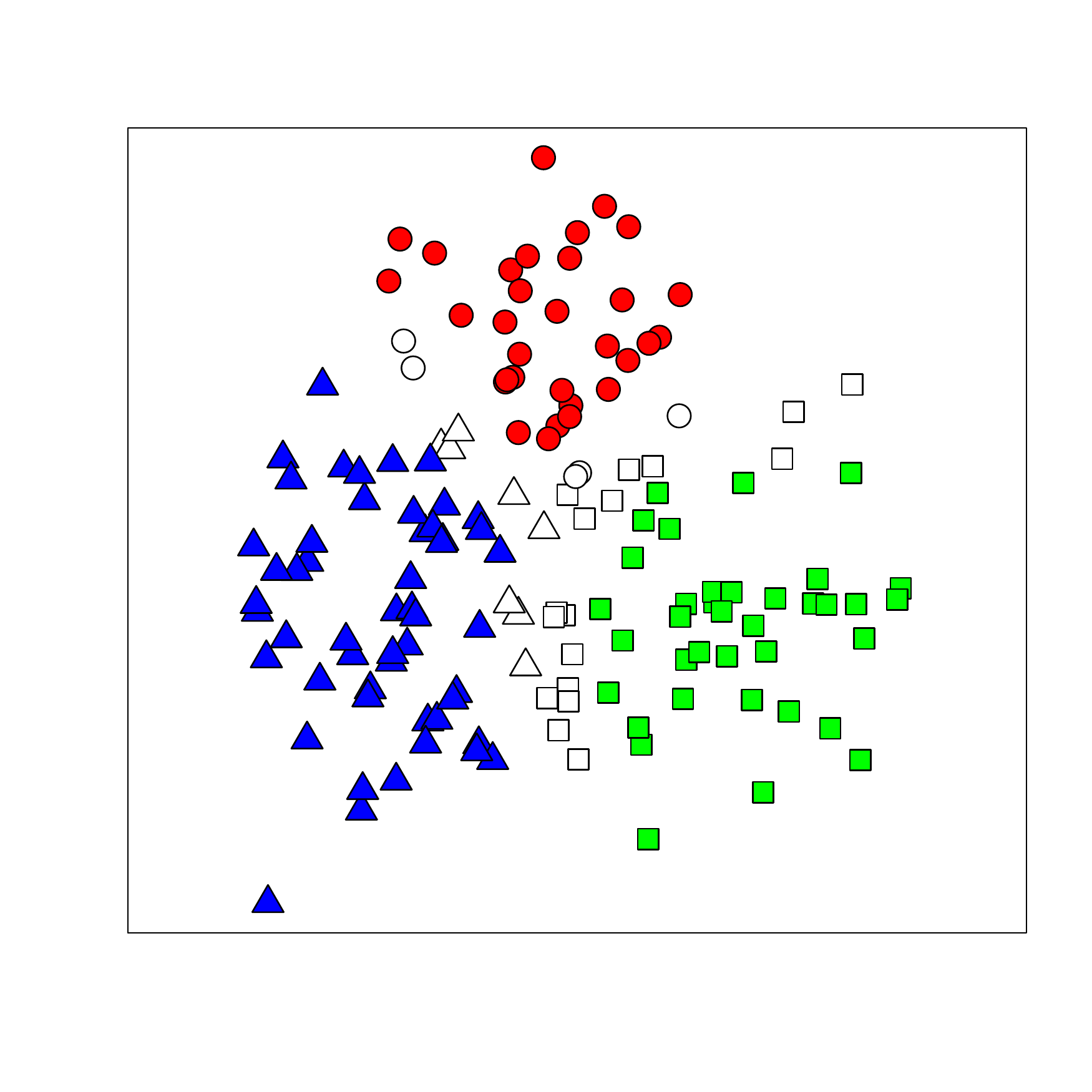} \label{fig:res:svm:synthetic}}
\subfloat[Synthetic -- hclust]{\includegraphics[width = 0.33 \textwidth]{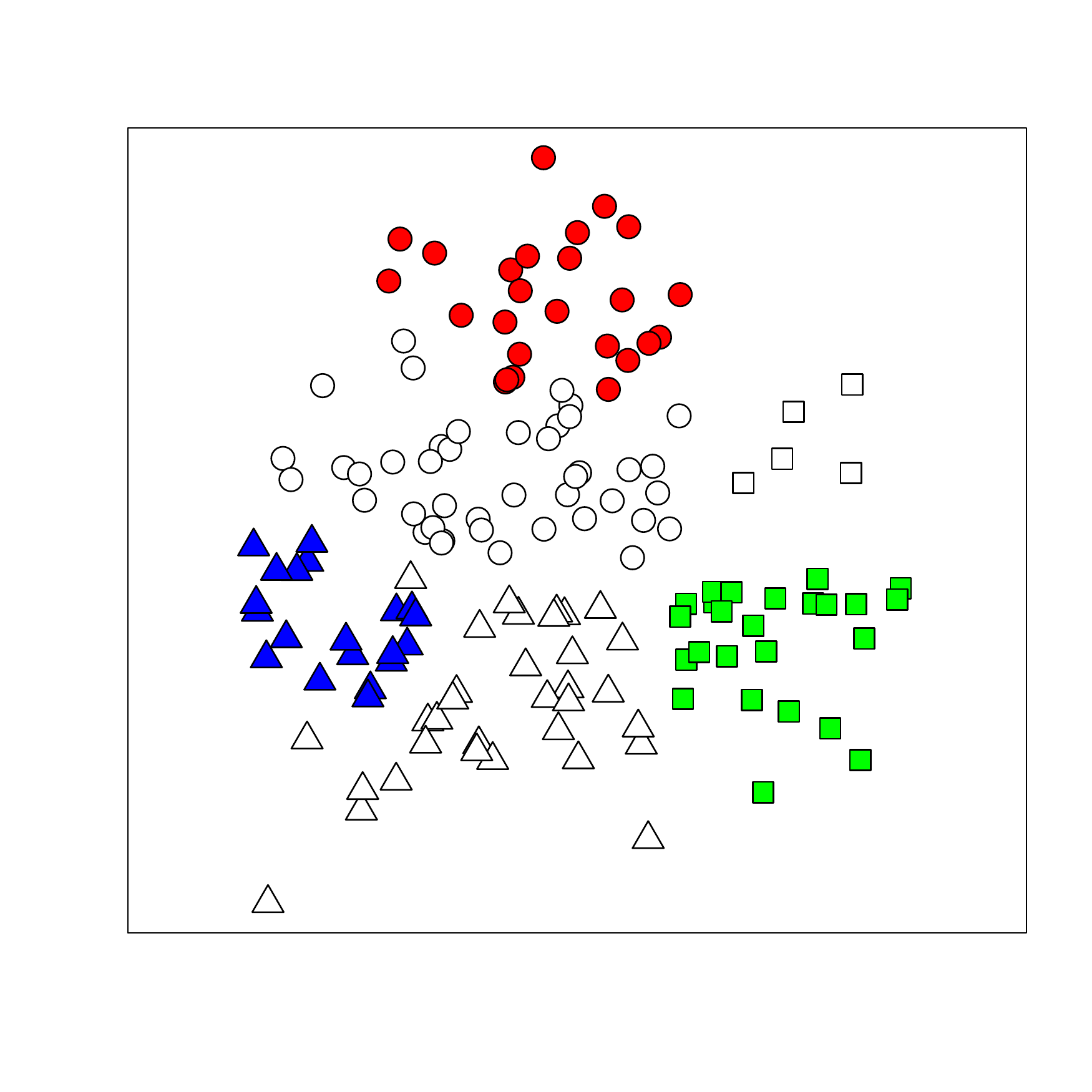} \label{fig:res:hclust:synthetic}}
\subfloat[Synthetic -- tclust]{\includegraphics[width = 0.33 \textwidth]{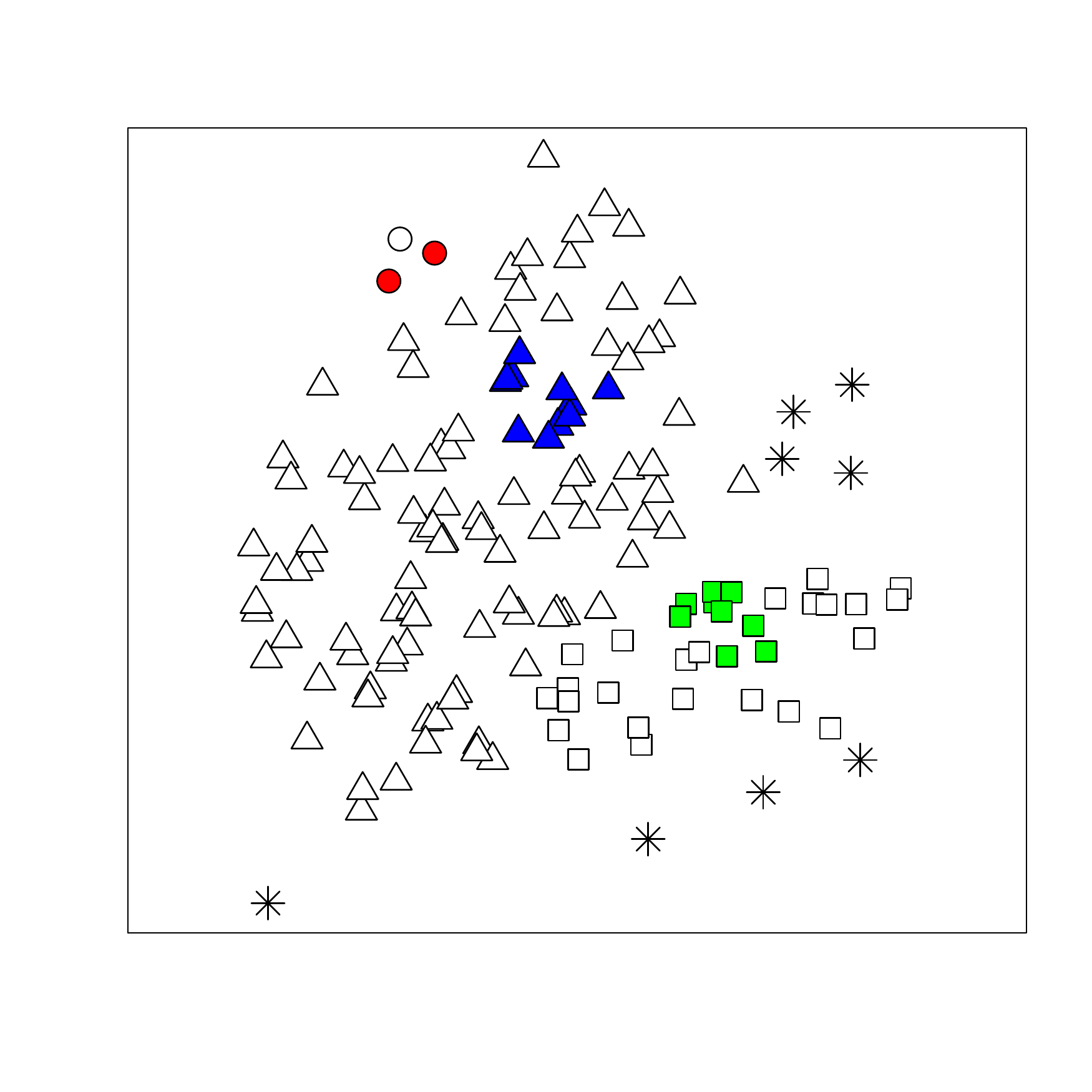} \label{fig:res:tclust:synthetic}}

\subfloat[Iris -- k-means++]{\includegraphics[width = 0.33 \textwidth]{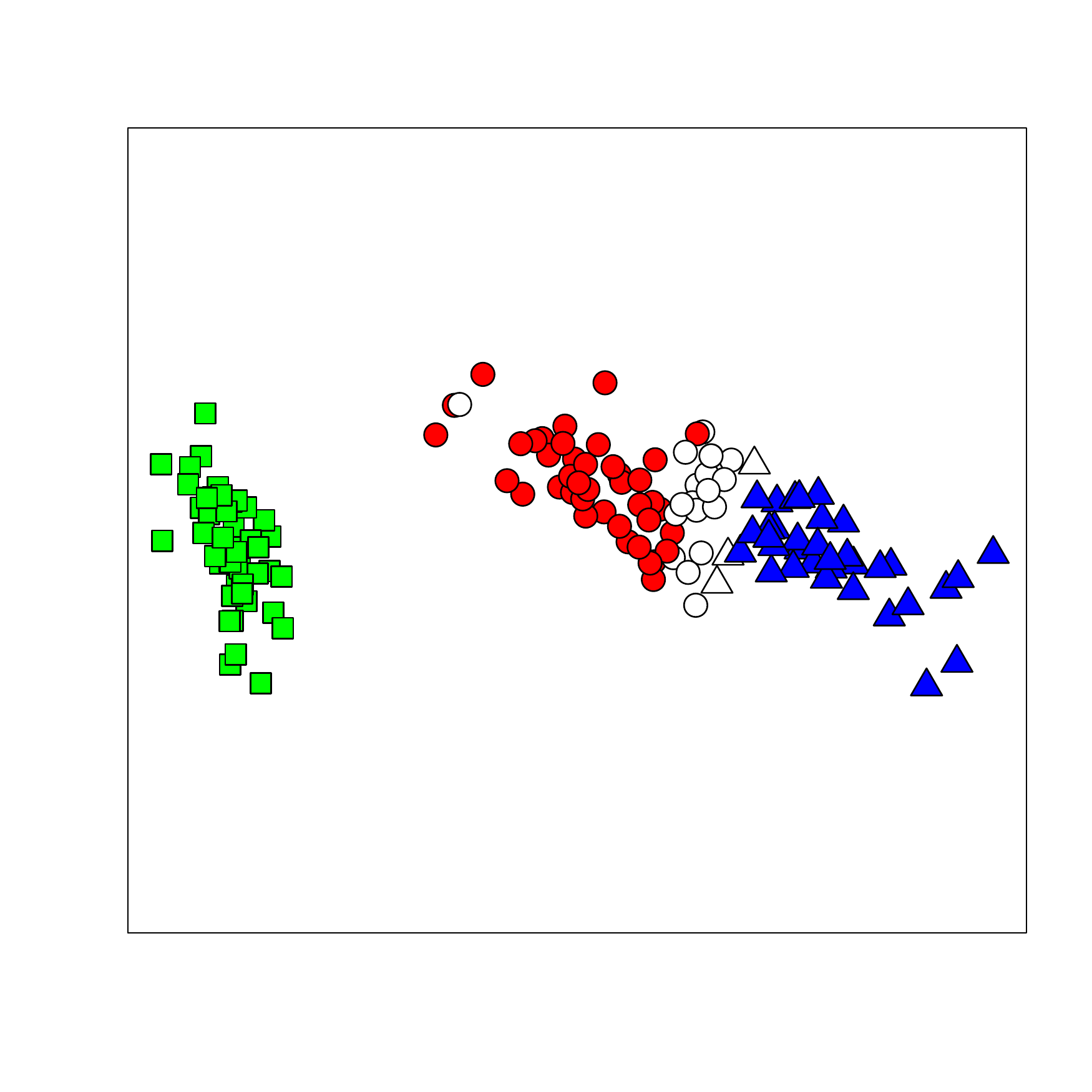} \label{fig:res:kmeans:iris}}
\subfloat[Iris -- hclust]{\includegraphics[width = 0.33 \textwidth]{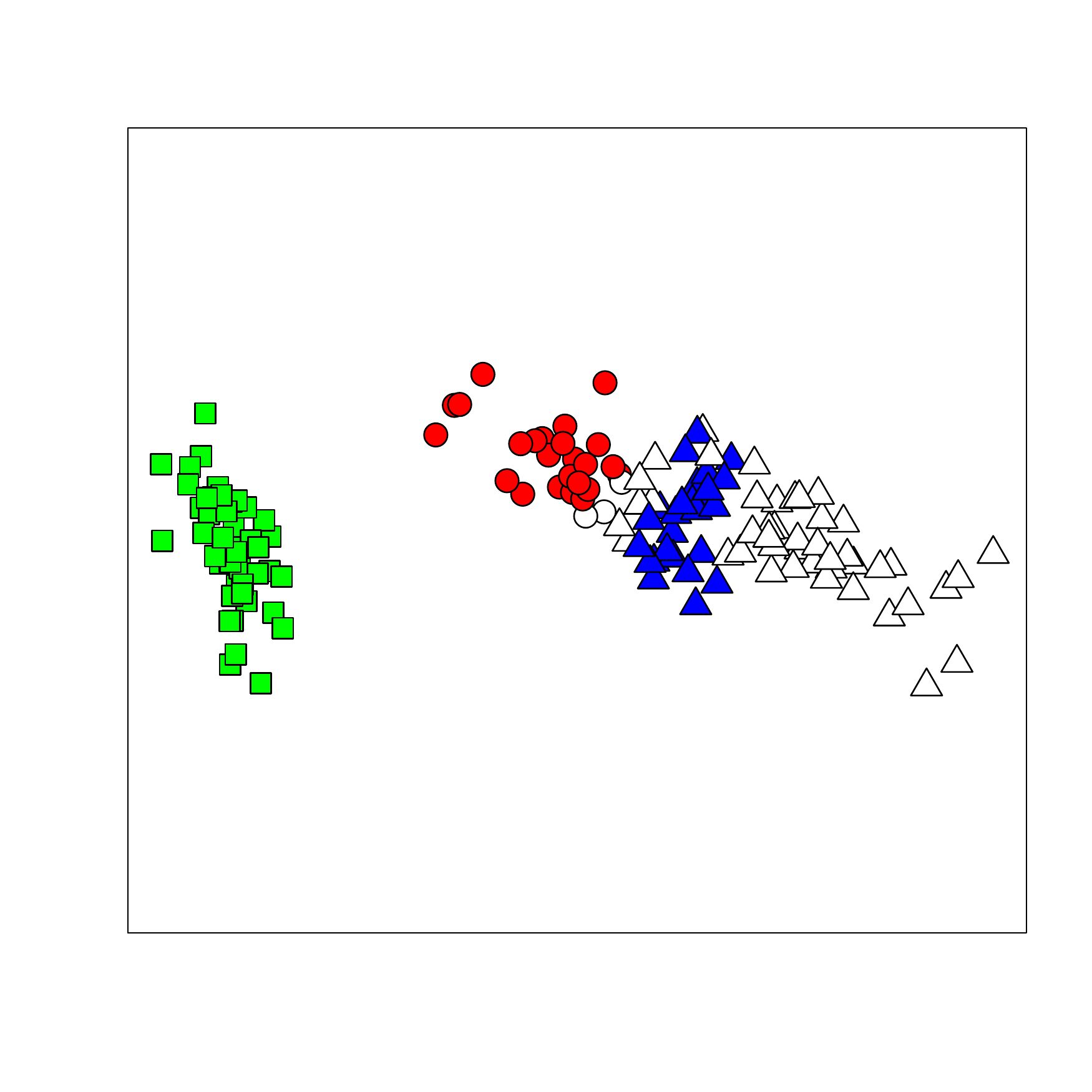} \label{fig:res:hclust:iris}}
\subfloat[Iris -- tkmeans]{\includegraphics[width = 0.33 \textwidth]{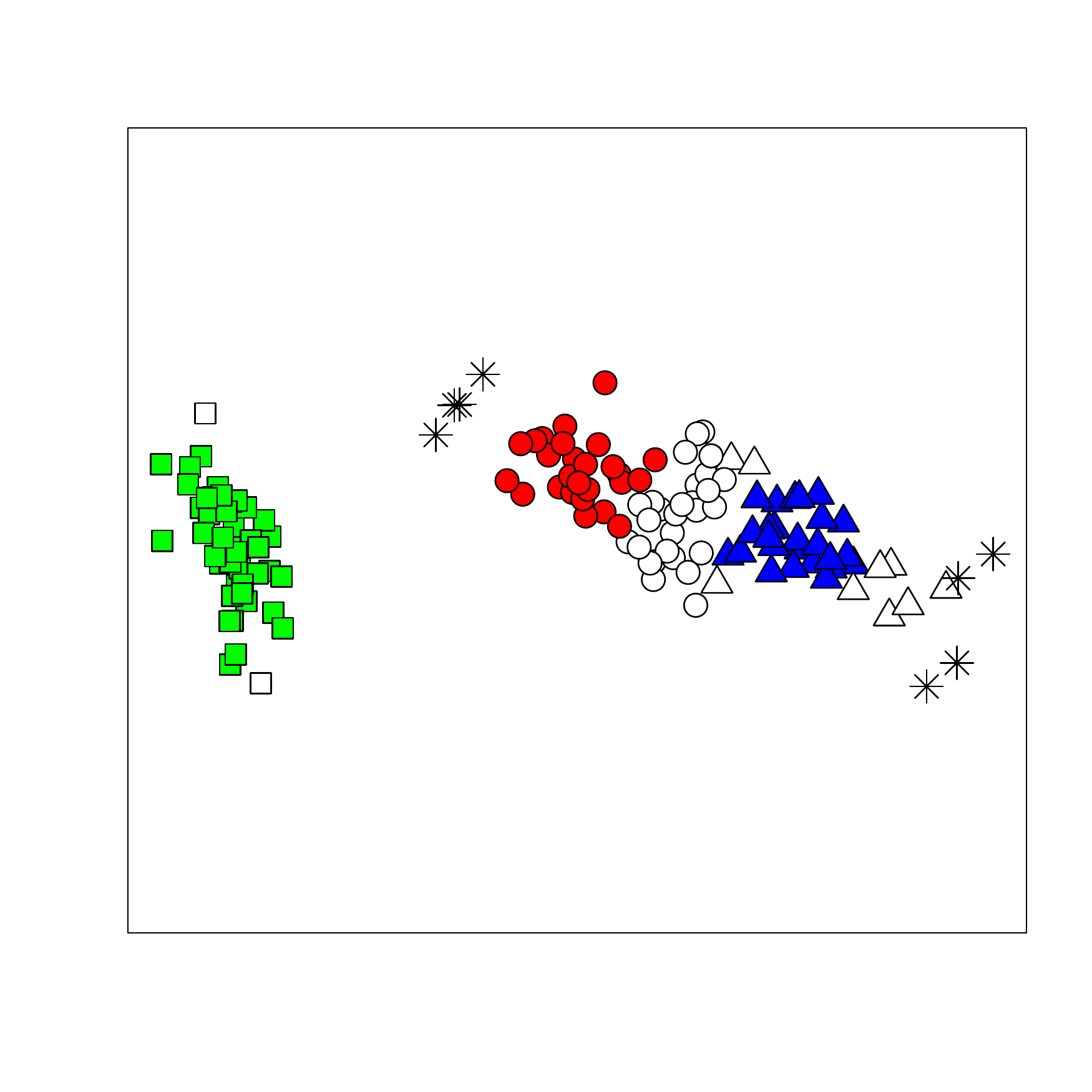} \label{fig:res:tkmeans:iris}}

\subfloat[BCW -- k-means++]{\includegraphics[width = 0.33 \textwidth]{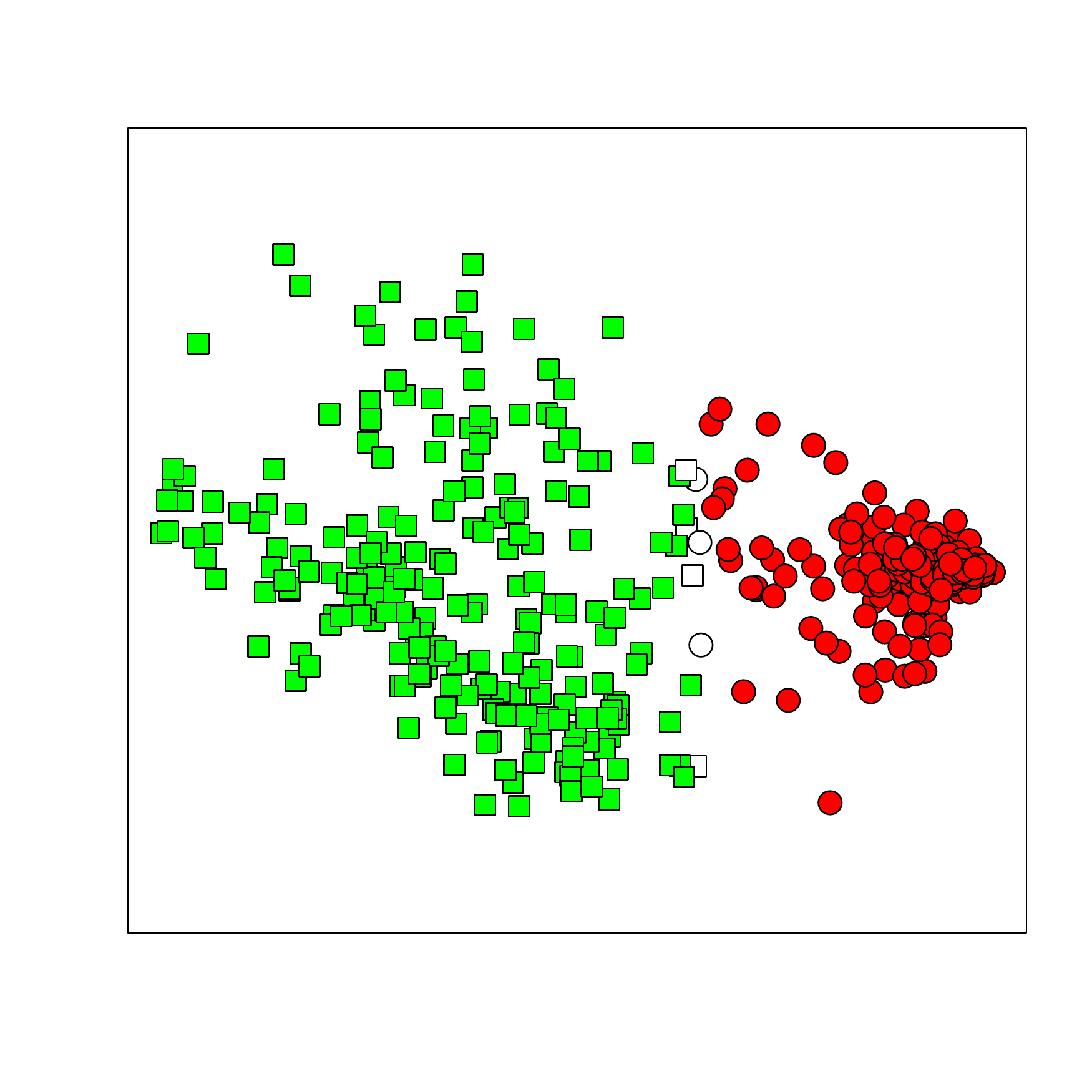} \label{fig:res:kmeans:bcw}}
\subfloat[BCW -- hclust]{\includegraphics[width = 0.33 \textwidth]{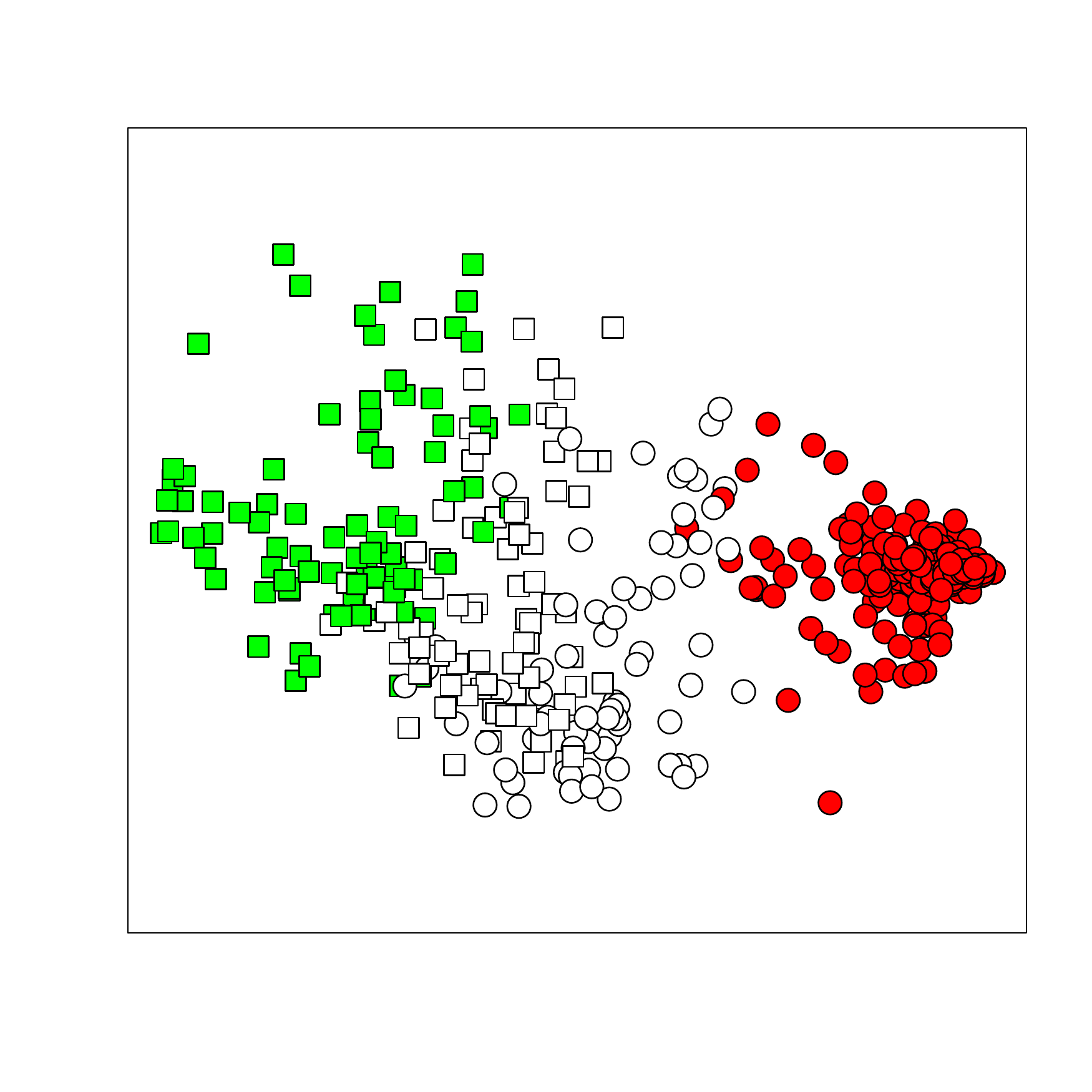} \label{fig:res:hclust:bcw}}
\subfloat[BCW -- tclust]{\includegraphics[width = 0.33 \textwidth]{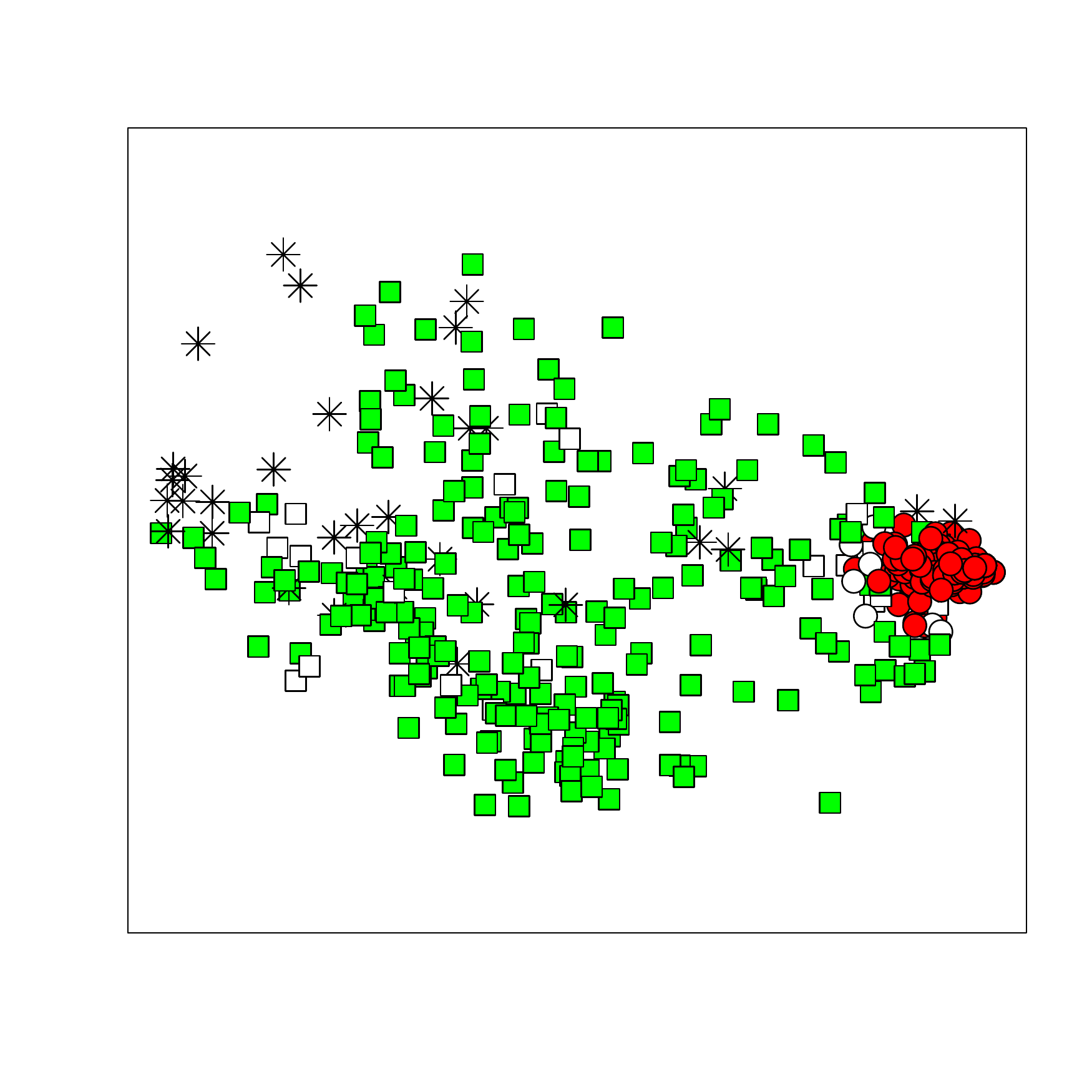} \label{fig:res:tclust:bcw}}
\caption{Clusterings of the synthetic, iris and
  breast-cancer-wisconsin (BCW) datasets. For iris and BCW the two
  first principal components are plotted. The original clustering is
  shown using different symbols for each cluster. Core clusters are
  shown with filled symbols and weak points are unfilled. Trimmed
  points (for the tclust and tkmeans algorithms) are marked with
  stars.}
\label{fig:res:all}
\end{figure*}


\begin{table}[!ht]
\caption{Agreement between core clusterings using the true
  distribution (Algorithm \algcoocpair) and the bootstrap
  approximation (Algorithm \algcoocbs). The results are presented as a
  confusion matrix, described to the right of the table. The top left
  (denoted by $a$) gives the number of data items categorised as
  matching core points by both the true distribution and the the
  bootstrap approximation, $d$ gives the data items categorised as
  weak points by both algorithms, whereas $b$ and $c$ give the number
  of points where the two methods disagree. The number of points for
  tclust and tkmeans does not add up to the total size of the dataset,
  since 5\% of the points are trimmed.}
\label{tab:res:confusion}
\centering

\newcolumntype{C}{>{\centering\arraybackslash}p{2em}}

\newcommand{\cfmat}[4]{
$ \begin{array}{C|C}
#1 & #2 \\
\hline
#3 & #4 
\end{array} $
}

\begin{minipage}{\linewidth}
\centering
\begin{tabular}{lcc}
\toprule
  \textbf{algorithm} & \textbf{synthetic dataset} & \textbf{KDD dataset}\\
\midrule
hclust & \cfmat{40}{4}{30}{76} & \cfmat{198}{0}{0}{2}\\ 
\midrule 
k-means++ & \cfmat{121}{3}{3}{23} & \cfmat{181}{0}{0}{19}\\ 
\midrule 
random forest & \cfmat{150}{0}{0}{0} & \cfmat{200}{0}{0}{0}\\ 
\midrule 
SVM & \cfmat{116}{6}{4}{24} & \cfmat{200}{0}{0}{0}\\ 
\midrule 
tclust & \cfmat{10}{11}{12}{109} & \cfmat{185}{0}{0}{5}\\ 
\midrule 
tkmeans & \cfmat{90}{20}{1}{31} & \cfmat{181}{3}{0}{6}\\ 
\bottomrule 
\end{tabular}
\end{minipage}\\
\vspace*{5pt}
\begin{minipage}{\linewidth}
  \centering
 $\begin{array}{ccc|c}
&&\multicolumn{2}{c}{\textbf{bootstrap}}\\
&& $\textrm{core}$ & $\textrm{weak}$ \\
\multirow{2}{*}{\rotatebox{90}{\textbf{true}}}
& $\textrm{core}$ & a & b \\
\cline{3-4}
  &$\textrm{weak}$ & c & d 
 \end{array} $ 
\end{minipage}
\end{table}

\begin{table}[!ht]
\caption{Results for core clustering of the synthetic and KDD datasets
  using the true distribution (Algorithm \algcoocpair). Here
  $\textrm{P}$ stands for purity with the subscripts $o$ and $c$
  denoting original clustering and core clustering, respectively. The
  fraction of weak points is denoted by $w$.}
\label{tab:res:true}
\centering
\begin{tabular}{ll ccc ccc}
 \toprule
 \textbf{dataset} & \textbf{algorithm} & $\textrm{P}_\textrm{o}$ &  $\textrm{P}_\textrm{c}$ & $w$ \\
 \midrule
\multirow{6}{*}{synthetic} 
   & hclust & 0.71 & 1.00 & 0.71\\ 
   & k-means++ & 0.83 & 0.90 & 0.17\\ 
   & random forest & 1.00 & 1.00 & 0.00\\ 
   & SVM & 0.85 & 0.92 & 0.19\\ 
   & tclust & 0.61 & 1.00 & 0.86\\ 
   & tkmeans & 0.83 & 0.89 & 0.27\\ 
\midrule
\multirow{6}{*}{KDD} 
   & hclust & 0.82 & 0.83 & 0.01\\ 
   & k-means++ & 0.82 & 0.81 & 0.10\\ 
   & random forest & 1.00 & 1.00 & 0.00\\ 
   & SVM & 1.00 & 1.00 & 0.00\\ 
   & tclust & 0.86 & 0.88 & 0.08\\ 
   & tkmeans & 0.86 & 0.89 & 0.08\\ 
\bottomrule
\end{tabular}
\end{table}

 
\begin{table}[!ht]
\caption{Results for core clustering of the synthetic, UCI datasets
  and KDD datasets using the bootstrap approximation
  (Algorithm \algcoocbs). The columns in the table are the same
  as in Table~\ref{tab:res:true}.}
\label{tab:res:bs}
\centering
\begin{tabular}{ll ccc ccc}
 \toprule
 \textbf{dataset} & \textbf{algorithm} & $\textrm{P}_\textrm{o}$ &  $\textrm{P}_\textrm{c}$ & $w$ \\
 \midrule
\multirow{6}{*}{BCW} 
   & hclust & 0.89 & 0.98 & 0.22\\ 
   & k-means++ & 0.96 & 0.97 & 0.01\\ 
   & random forest & 1.00 & 1.00 & 0.00\\ 
   & SVM & 0.98 & 0.99 & 0.02\\ 
   & tclust & 0.92 & 0.93 & 0.10\\ 
   & tkmeans & 0.96 & 0.97 & 0.08\\ 
\midrule
\multirow{6}{*}{glass} 
   & hclust & 0.50 & 0.57 & 0.16\\ 
   & k-means++ & 0.59 & 0.60 & 0.17\\ 
   & random forest & 1.00 & 1.00 & 0.00\\ 
   & SVM & 0.79 & 0.91 & 0.26\\ 
   & tclust & 0.57 & 0.76 & 0.63\\ 
   & tkmeans & 0.59 & 0.71 & 0.51\\ 
\midrule
\multirow{6}{*}{iris} 
   & hclust & 0.84 & 0.88 & 0.32\\ 
   & k-means++ & 0.89 & 0.98 & 0.15\\ 
   & random forest & 1.00 & 1.00 & 0.00\\ 
   & SVM & 0.97 & 0.99 & 0.05\\ 
   & tclust & 0.98 & 1.00 & 0.47\\ 
   & tkmeans & 0.89 & 0.98 & 0.33\\ 
\midrule
\multirow{6}{*}{synthetic} 
   & hclust & 0.71 & 1.00 & 0.53\\ 
   & k-means++ & 0.83 & 0.90 & 0.17\\ 
   & random forest & 1.00 & 1.00 & 0.00\\ 
   & SVM & 0.85 & 0.95 & 0.20\\ 
   & tclust & 0.61 & 1.00 & 0.85\\ 
   & tkmeans & 0.83 & 0.90 & 0.39\\ 
\midrule
\multirow{6}{*}{wine} 
   & hclust & 0.67 & 0.75 & 0.50\\ 
   & k-means++ & 0.70 & 0.74 & 0.34\\ 
   & random forest & 1.00 & 1.00 & 0.00\\ 
   & SVM & 1.00 & 1.00 & 0.01\\ 
   & tclust & 0.70 & 0.72 & 0.16\\ 
   & tkmeans & 0.70 & 0.71 & 0.18\\ 
\midrule
\multirow{6}{*}{yeast} 
   & hclust & 0.39 & 0.70 & 0.91\\ 
   & k-means++ & 0.48 & 0.59 & 0.77\\ 
   & random forest & 0.99 & 1.00 & 0.04\\ 
   & SVM & 0.64 & 0.76 & 0.33\\ 
   & tclust & 0.52 & 0.65 & 0.89\\ 
   & tkmeans & 0.53 & 0.65 & 0.67\\ 
\midrule
\multirow{6}{*}{KDD} 
   & hclust & 0.82 & 0.83 & 0.01\\ 
   & k-means++ & 0.82 & 0.81 & 0.10\\ 
   & random forest & 1.00 & 1.00 & 0.00\\ 
   & SVM & 1.00 & 1.00 & 0.00\\ 
   & tclust & 0.86 & 0.88 & 0.08\\ 
   & tkmeans & 0.86 & 0.91 & 0.10\\ 
 \bottomrule\\
\end{tabular}
\end{table}

\begin{table}[!ht]
\caption{Running times in seconds for core clustering of the datasets, using 1000 bootstrap iterations.}
\label{tab:res:runningtimes}
\centering
\begin{tabular}{lcccccc}
  \toprule
 & \multicolumn{6}{c}{\textbf{algorithm}}\\
 \cmidrule{2-7}
 \textbf{dataset} & hclust & k-means++ & RF & SVM & tclust & tkmeans \\ 
  \midrule
synthetic & 1 & 5 & 37 & 5 & 17 & 19 \\ 
  KDD     & 2 & 6 & 44 & 6 & 26 & 18 \\ 
  iris    & 1 & 6 & 38 & 6 & 30 & 23 \\ 
  wine    & 3 & 8 & 71 & 10 & 112 & 103 \\ 
  glass   & 3 & 12 & 104 & 14 & 162 & 173 \\ 
  BCW     & 32 & 13 & 228 & 23 & 179 & 103 \\ 
  yeast   & 156 & 91 & 952 & 411 & 2134 & 1694 \\ 
   \bottomrule
\end{tabular}
\end{table}

\subsection{Scalability}
As shown above, the method of finding core clusters is at least as
hard as finding the maximum clique, but the search for core clusters
is easily parallelisable in terms of the original clusters, i.e., each
of the core clusters can be found independently in parallel within
each of the original clusters (lines 6--7 in Algorithm
\algcoreclustering). Changing the number of replicates used in the
bootstrap also affect the running time; the time complexity is
quadratic with respect to the number of items in the dataset, as shown
in Sections~\ref{sec:methods:F} and \ref{sec:methods:bs}. Typical
running times on the datasets used in the experiments for this paper
with the recommended bootstrap method (Algorithm \algcoocbs) are
presented in Table~\ref{tab:res:runningtimes}, using R-code with some
C++ on a 1.8 GHz Intel Core i7 CPU and 1000 bootstrap iterations. The
baseline Algorithm \algcoocpair, which has been presented for
comparison, is substantially slower, on the order of days. Clearly,
there is a large variation in the running times between the different
clustering functions, due to the underlying implementation of the
algorithms. The running time of core clustering is dominated by the
time required by the used clustering algorithm.

\section{Discussion}
\label{discussion}
The experimental results show that core clustering improves the
homogeneity of the clusters, i.e., in-cluster consistency increases as
weak points are excluded from the core clusters.  The agreement
between use of the true underlying data distribution and the bootstrap
approximation is good. This means that the bootstrap method of
calculating the co-occurrence probabilities offers a computationally
efficient and feasible way of determining the core clusters.

Core clustering reflects the interaction between the clustering
algorithm and the data. For some clustering algorithms the result is
not deterministic, which means that factors such as, e.g., choice of
initial cluster centres affects the clustering outcome on different
runs. However, this can be overcome by using methods that optimise the
initial conditions, such as the k-means++ method used here, or by
combining the output from multiple runs on the same dataset as done,
e.g., in \citet{gionis:2007:a}.

Unstable clustering functions, as exemplified in the results using
hierarchical clustering for the synthetic dataset, produce core
clusters with small radii and a large number of weak points. In
general, the cluster radii are proportional to the value of $\alpha$,
i.e., a low $\alpha$ means that the co-occurrence probability
$1-\alpha$ will be high which in turn means that the core clusters
will be small due to this strict criterion. Conversely, a high
$\alpha$ means that the co-occurrence probability is low, and hence
the core clusters will be large. However, it should also be noted that
the size of the core clusters depends on the characteristics of the
data and on the assumptions of the clustering function.

The core clusters make it possible to detect an unstable clustering
algorithm and find the data items for which the cluster membership is
uncertain. The benefit of using core clustering is in the statistical
guarantee it gives on the co-occurrence of data items in the same
cluster, and this helps in the interpretation of the structure of the
dataset. Core clustering can be used to gain insight into how the
clustering algorithm is using the data, by investigating the size of
the core clusters, e.g., using different parameters for the clustering
algorithm.

The number of weak points detected in the core clustering of a dataset
can be used as the instability measure of a clustering algorithm,
which allows the stability of a clustering algorithm in terms of the
correct number of clusters in the data to be investigated, see, e.g.,
\cite{Luxburg:2010:a}. In practice, this means that core clusterings
of a dataset using a varying number of clusters each yield a different
number of weak points. The core clustering with the fewest number of
weak points is the most stable clustering, indicating how many
clusters the data might contain.

The method of core clustering can also be viewed in terms of
hypothesis testing; the points within the core clusters are guaranteed
to co-occur with a given probability. The core clusters hence allows
the testing of the hypothesis whether two data items belong to the
same cluster, at the desired confidence level. This has implications
for using clustering algorithms in explorative data mining tasks. The
statistical guarantee given by the core clusters can be particularly
useful in certain application areas, e.g., in the medical domain,
since core clustering can be used to explore the structure of a
dataset with a confidence guarantee on the clusters. Clustering is
also used in bioinformatics to identify groups of genes with similar
expression patterns. Core clusters could be valuable also in this
context.

The method of core clustering is in itself model-free, making no
assumptions regarding the structure of the data. The only assumptions
are those imposed by the used clustering algorithm. Core clustering
can be used in conjunction with any unsupervised or supervised
learning algorithm, as shown in this paper. A useful property is that
core clustering can be used to make a non-robust clustering algorithm,
such as traditional k-means, more robust, as shown in the experiments
above.

Existing robust clustering methods try to detect outlying points in
the dataset using, e.g., some distance metric. These methods also
typically require that the proportion of points to be discarded must
be given in advance. In contrast, in core clustering one only needs to
specify the confidence level for the co-occurrences of the items in
the core clusters. This inclusive criterion can be viewed as being
more natural than specifying what fraction of data items to discard.

Usually, outliers in a dataset consist of data items in the periphery
of clusters. As noted by \cite{Fritz:2012:a}, it is also important to
remove ``bridge points'', i.e., data items located between
clusters. Using the core clustering method, it is precisely the weak
points not in the core clusters that are the bridge points. Hence,
core clustering can be used to augment any clustering algorithm to
make it capable of removing bridge points, without making any model
assumptions on the data. Core clustering can be used in conjunction
with robust clustering functions, as shown in this paper. In this
case, both peripheral outliers and bridge points can be efficiently
detected.

In the domain of classifiers, the core clusters can be interpreted as
presenting the set of data points for which the classifier output is
consistent. Core clustering could be used jointly with the
sampling-based method introduced in \citet{Henelius14DAMI} allowing
introspection into the way a classifier uses the features of the data
when making predictions. This would make it possible to study the
interplay between the features in the data in areas where the
classification results are robust and in the more uncertain areas
represented by the weak points.

The problem of core clustering can also be viewed from a Bayesian
perspective. Assume that the dataset $D$ obeys a Bayesian mixture
model of $k$ components \cite{gelman:2004:a}. In this case, the
posterior co-occurrence probability of two data items $i,j \in D$
belonging to the same mixture component can be computed using the
standard Bayesian machinery.

As noted by \cite[Sec. 8.4]{hastie:2009:a}, the bootstrap represents
an approximate nonparametric, noninformative posterior distribution
for a parameter of interest. Hence, the non-parametric bootstrap
approximates the Bayesian co-occurrence probability, if the clustering
function gives the maximum likelihood mixture assignment for the
Bayesian mixture model.  For such a clustering function core clusters
can therefore be interpreted as sets of points for which the posterior
probability of two items occurring in the same cluster is at least
$1-\alpha$. Obviously, the core clusters depend on the modelling
assumptions: if the mixture model fits the data well, the
co-occurrence probabilities tend to be higher and the core clusters
larger than if the model fits the data poorly.

The advantage of the bootstrap approach over direct Bayesian treatment
is that we do not need to know the underlying model, or even if the
clustering function gives the maximum likelihood estimate of any
Bayesian model. The close relation of the bootstrap method to the
Bayesian way of computing co-occurrence probabilities gives additional
insight into interpreting the results and the interplay between the
data and the modelling assumptions here implicit in the clustering
function.

The core clustering method is versatile, yet conceptually simple. The
sampling of data items can be performed in many different
ways. However, since the nature of the true underlying data
distribution is seldom known in real applications, the non-parametric
bootstrapping of data items for the calculation of the co-occurrences
used here is the most feasible approach and it is also computationally
fast.

\clearpage
\section{Concluding remarks}
\label{cr}
This paper presents a conceptually simple and efficient method for
finding statistically robust clusters. The method is independent of
the used clustering algorithm and is equally usable with both
clustering algorithms and classifiers.

As demonstrated in the experiments in this paper, the agreement
between the bootstrap approximation and the true distribution is
high. Furthermore, different bootstrap schemes can be devised using,
e.g., out-of-bag samples for cluster estimation. The bootstrap
approximation is usable if the clustering algorithm is not sensitive
to the fact that the bootstrap approximation necessarily produces
duplicated data points. It would also be possible to mitigate this
issue, e.g., by jittering the bootstrapped data, as noted by
\cite{henning:2007:a}. Another approach would be to estimate the
distribution of data items parametrically and use the obtained
distribution to generate new data sets, e.g., using a parametric
bootstrap approach. It is therefore possible to fine-tune the way the
co-occurrences are calculated, if needed.

The core clusters are naturally interpreted as the strong points in
the original clusters obtained using some clustering algorithm, which
makes the interpretation of core clusters straightforward. Core
clustering can hence be used to make a non-robust clustering
algorithm, such as the traditional k-means or hierarchical clustering,
more robust by providing a probabilistic guarantee for the cluster
membership of data items co-occurring in the same core cluster.

Core clustering further extends robust clustering algorithms by
allowing points lying on the border between clusters to be excluded,
as shown in the experimental results in this paper, without model
assumptions or use of distance metrics.

Summarising, core clustering can be used to find statistically valid
core clusters using any clustering or classification algorithm and any
dataset which can be resampled. The method is generic and no
additional assumptions, such as distance or similarity measures, are
needed.

\section*{Acknowledgements}
This work was supported by the Academy of Finland (decision 288814),
Tekes (Revolution of Knowledge Work project), and the High-Performance
Data Mining for Drug Effect Detection project at Stockholm University,
funded by the Swedish Foundation for Strategic Research under grant
IIS11-0053.

\bibliographystyle{abbrvnat_mod}
\bibliography{clustering_with_confidence}

\begin{thebibliography}{50}
\providecommand{\natexlab}[1]{#1}
\providecommand{\url}[1]{\texttt{#1}}
\expandafter\ifx\csname urlstyle\endcsname\relax
  \providecommand{\doi}[1]{doi: #1}\else
  \providecommand{\doi}{doi: \begingroup \urlstyle{rm}\Url}\fi

\bibitem[Arthur and Vassilvitskii(2007)]{vassilvitskii:2007:a}
Arthur, D. and Vassilvitskii, S.
\newblock k-means++: The advantages of careful seeding.
\newblock In \emph{Proceedings of the eighteenth annual ACM-SIAM symposium on
  Discrete algorithms}, pages 1027--1035. Society for Industrial and Applied
  Mathematics, 2007.

\bibitem[Bache and Lichman(2014)]{Bache:2014:a}
Bache, K. and Lichman, M.
\newblock {UCI} machine learning repository, 2014.
\newblock URL \url{http://archive.ics.uci.edu/ml}.

\bibitem[Baraldi and Blonda(1999)]{fuzzysurvey}
Baraldi, A. and Blonda, P.
\newblock A survey of fuzzy clustering algorithms for pattern recognition.
\newblock \emph{IEEE Trans Syst Man Cybern B}, 29:\penalty0 786--801, 1999.

\bibitem[Bellec et~al.(2010)Bellec, Rosa-Neto, Lyttelton, Benali, and
  Evans]{bellec:2010:a}
Bellec, P., Rosa-Neto, P., Lyttelton, O.~C., Benali, H., and Evans, A.~C.
\newblock Multi-level bootstrap analysis of stable clusters in resting-state
  fmri.
\newblock \emph{Neuroimage}, 51\penalty0 (3):\penalty0 1126--1139, 2010.

\bibitem[Bezdek(1981)]{Bezdek1981}
Bezdek, J.~C.
\newblock \emph{Pattern Recognition with Fuzzy Objective Function Algorithms}.
\newblock Kluwer Academic Publishers, Norwell, MA, USA, 1981.
\newblock ISBN 0306406713.

\bibitem[Bron and Kerbosch(1973)]{bron:1973:a}
Bron, C. and Kerbosch, J.
\newblock Algorithm 457: finding all cliques of an undirected graph.
\newblock \emph{Communications of the ACM}, 16\penalty0 (9):\penalty0 575--577,
  1973.

\bibitem[Chawla and Gionis(2013)]{chawla:2013:a}
Chawla, S. and Gionis, A.
\newblock k-means-: A unified approach to clustering and outlier detection.
\newblock In \emph{SDM}, pages 189--197. SIAM, 2013.

\bibitem[Cuesta-Albertos et~al.(1997)Cuesta-Albertos, Gordaliza, Matr{\'a}n,
  et~al.]{Cuesta:1997:a}
Cuesta-Albertos, J., Gordaliza, A., Matr{\'a}n, C., et~al.
\newblock Trimmed $ k $-means: an attempt to robustify quantizers.
\newblock \emph{The Annals of Statistics}, 25\penalty0 (2):\penalty0 553--576,
  1997.

\bibitem[Fern{\'a}ndez-Delgado et~al.(2014)Fern{\'a}ndez-Delgado, Cernadas,
  Barro, and Amorim]{delgado:2014:a}
Fern{\'a}ndez-Delgado, M., Cernadas, E., Barro, S., and Amorim, D.
\newblock Do we need hundreds of classifiers to solve real world classification
  problems?
\newblock \emph{Journal of Machine Learning Research}, 15:\penalty0 3133--3181,
  2014.

\bibitem[Fowlkes and Mallows(1983)]{fowlkes:1983:a}
Fowlkes, E.~B. and Mallows, C.~L.
\newblock A method for comparing two hierarchical clusterings.
\newblock \emph{Journal of the American statistical association}, 78\penalty0
  (383):\penalty0 553--569, 1983.

\bibitem[Fred(2001)]{fred:2001:a}
Fred, A.
\newblock Finding consistent clusters in data partitions.
\newblock In \emph{International Workshop on Multiple Classifier Systems},
  pages 309--318. Springer, 2001.

\bibitem[Fred and Jain(2002)]{fred:2002:a}
Fred, A. and Jain, A.
\newblock Data clustering using evidence accumulation.
\newblock In \emph{Pattern Recognition, 2002. Proceedings. 16th International
  Conference on}, volume~4, pages 276--280. IEEE, 2002.

\bibitem[Fred and Jain(2005)]{fred:2005:a}
Fred, A.~L. and Jain, A.~K.
\newblock Combining multiple clusterings using evidence accumulation.
\newblock \emph{Pattern Analysis and Machine Intelligence, IEEE Transactions
  on}, 27\penalty0 (6):\penalty0 835--850, 2005.

\bibitem[Fritz et~al.(2012)Fritz, Garc\'{\i}a-Escudero, and
  Mayo-Iscar]{Fritz:2012:a}
Fritz, H., Garc\'{\i}a-Escudero, L.~A., and Mayo-Iscar, A.
\newblock {tclust}: An {R} package for a trimming approach to cluster analysis.
\newblock \emph{Journal of Statistical Software}, 47\penalty0 (12):\penalty0
  1--26, 2012.
\newblock URL \url{http://www.jstatsoft.org/v47/i12/}.

\bibitem[Garc{\'\i}a-Escudero et~al.(2008)Garc{\'\i}a-Escudero, Gordaliza,
  Matr{\'a}n, and Mayo-Iscar]{GarciaEscudero:2008:a}
Garc{\'\i}a-Escudero, L.~A., Gordaliza, A., Matr{\'a}n, C., and Mayo-Iscar, A.
\newblock A general trimming approach to robust cluster analysis.
\newblock \emph{The Annals of Statistics}, pages 1324--1345, 2008.

\bibitem[Garc{\'\i}a-Escudero et~al.(2010)Garc{\'\i}a-Escudero, Gordaliza,
  Matr{\'a}n, and Mayo-Iscar]{garcia:2010:a}
Garc{\'\i}a-Escudero, L.~A., Gordaliza, A., Matr{\'a}n, C., and Mayo-Iscar, A.
\newblock A review of robust clustering methods.
\newblock \emph{Advances in Data Analysis and Classification}, 4\penalty0
  (2-3):\penalty0 89--109, 2010.

\bibitem[Gelman et~al.(2004)Gelman, Carlin, Stern, and Rubin]{gelman:2004:a}
Gelman, A., Carlin, J.~B., Stern, H.~S., and Rubin, D.~B.
\newblock \emph{Bayesian data analysis}.
\newblock Taylor \& Francis, 2 edition, 2004.

\bibitem[Ghosh and Acharya(2013)]{ghosh:2013:a}
Ghosh, J. and Acharya, A.
\newblock {Cluster Ensembles: Theory and Applications}.
\newblock In Aggarwal, C.~C. and Reddy, C.~K., editors, \emph{{Data clustering:
  Algorithms and Applications}}. CRC Press, 2013.

\bibitem[Gionis et~al.(2007)Gionis, Mannila, and Tsaparas]{gionis:2007:a}
Gionis, A., Mannila, H., and Tsaparas, P.
\newblock Clustering aggregation.
\newblock \emph{ACM Transactions on Knowledge Discovery from Data (TKDD)},
  1\penalty0 (1):\penalty0 4, 2007.

\bibitem[Guinepain and Gruenwald(2005)]{Guinepain2005}
Guinepain, S. and Gruenwald, L.
\newblock Research issues in automatic database clustering.
\newblock \emph{SIGMOD Rec.}, 34\penalty0 (1):\penalty0 33--38, Mar. 2005.
\newblock ISSN 0163-5808.
\newblock \doi{10.1145/1058150.1058157}.
\newblock URL \url{http://doi.acm.org/10.1145/1058150.1058157}.

\bibitem[Gustafson and Kessel(1979)]{Gustafson79}
Gustafson, E.~E. and Kessel, W.~C.
\newblock Fuzzy clustering with a fuzzy covariance matrix.
\newblock In \emph{In Proceedings of the IEEE Conference on Decision and
  Control}, page 761–766. Piscataway, NJ, USA. IEEE Press, 1979.

\bibitem[Hartigan(1988)]{hartigan75}
Hartigan, J.
\newblock \emph{Clustering Algorithms}.
\newblock New York: John Wiley and Sons, Inc, 1988.

\bibitem[Hastie et~al.(2009)Hastie, Tibshirani, and Friedman]{hastie:2009:a}
Hastie, T., Tibshirani, R., and Friedman, J.
\newblock \emph{The Elements of Statistical Learning -- Data Mining, Inference
  and Prediction}.
\newblock Springer, 2 edition, 2009.

\bibitem[Henelius et~al.(2014)Henelius, Puolam\"{a}ki, Bostr\"{o}m, Asker, and
  Papapetrou]{Henelius14DAMI}
Henelius, A., Puolam\"{a}ki, K., Bostr\"{o}m, H., Asker, L., and Papapetrou, P.
\newblock A peek into the black box: Exploring classifiers by randomization.
\newblock \emph{Data Mining and Knowledge Discovery}, 28\penalty0
  (5--6):\penalty0 1503--1529, Sept. 2014.

\bibitem[Hennig(2007)]{henning:2007:a}
Hennig, C.
\newblock Cluster-wise assessment of cluster stability.
\newblock \emph{Computational Statistics \& Data Analysis}, 52\penalty0
  (1):\penalty0 258--271, 2007.

\bibitem[Hoeppner et~al.(1988)Hoeppner, F., Kruse, and Runkler]{hoppner99}
Hoeppner, F., F., K., Kruse, R., and Runkler, T.
\newblock \emph{Fuzzy Cluster Analysis: Methods for Classification, Data
  Analysis and Image Recognition}.
\newblock John Wiley and Sons Ltd, New York, NY, USA, 1988.

\bibitem[Jain et~al.(1999)Jain, Murty, and Flynn]{Jain1999}
Jain, A.~K., Murty, M.~N., and Flynn, P.~J.
\newblock Data clustering: A review.
\newblock \emph{ACM Comput. Surv.}, 31\penalty0 (3):\penalty0 264--323, Sept.
  1999.
\newblock ISSN 0360-0300.
\newblock \doi{10.1145/331499.331504}.
\newblock URL \url{http://doi.acm.org/10.1145/331499.331504}.

\bibitem[James et~al.(2013)James, Witten, Hastie, and Tibshirani]{James2013}
James, G., Witten, D., Hastie, T., and Tibshirani, R.
\newblock \emph{An Introduction to Statistical Learning}.
\newblock Springer, 2013.

\bibitem[Karp(1972)]{karp:1972:a}
Karp, R.~M.
\newblock \emph{Reducibility among combinatorial problems}.
\newblock Springer, 1972.

\bibitem[Kellam et~al.(2001)Kellam, Liu, Martin, Orengo, Swift, and
  Tucker]{kellam:2001:a}
Kellam, P., Liu, X., Martin, N., Orengo, C., Swift, S., and Tucker, A.
\newblock Comparing, contrasting and combining clusters in viral gene
  expression data.
\newblock In \emph{Proceedings of 6th workshop on intelligent data analysis in
  medicine and pharmocology}, pages 56--62. Citeseer, 2001.

\bibitem[Kogan et~al.(2006)Kogan, Nicholas, and Teboulle]{SurveyKogan}
Kogan, J., Nicholas, C., and Teboulle, M., editors.
\newblock \emph{A Survey of Clustering Data Mining Techniques}, 2006. Springer
  Berlin Heidelberg.
\newblock ISBN 978-3-540-28348-5.
\newblock \doi{10.1007/3-540-28349-8_2}.
\newblock URL \url{http://dx.doi.org/10.1007/3-540-28349-8_2}.

\bibitem[Krishnapuram and Keller(1993)]{Krishnapuram1993}
Krishnapuram, R. and Keller, J.~M.
\newblock A possibilistic approach to clustering.
\newblock \emph{Trans. Fuz Sys.}, 1\penalty0 (2):\penalty0 98--110, May 1993.
\newblock ISSN 1063-6706.
\newblock \doi{10.1109/91.227387}.
\newblock URL \url{http://dx.doi.org/10.1109/91.227387}.

\bibitem[Manning et~al.(2009)Manning, Raghavan, and
  Sch{\"u}tze]{manning:2009:a}
Manning, C.~D., Raghavan, P., and Sch{\"u}tze, H.
\newblock \emph{Introduction to Information Retrieval}.
\newblock Cambridge University Press, 2009.

\bibitem[McLachlan and Basford(1988)]{mclachlan98}
McLachlan, G.~J. and Basford, K.~E.
\newblock \emph{Mixture models: inference and application to clustering}.
\newblock New York: Marcel Dekker, 1988.

\bibitem[Nguyen and Caruana(2007)]{nguyen:2007:a}
Nguyen, N. and Caruana, R.
\newblock Consensus clusterings.
\newblock In \emph{Seventh IEEE International Conference on Data Mining (ICDM
  2007)}, pages 607--612. IEEE, 2007.

\bibitem[Ojala(2010)]{Ojala2010ICDM}
Ojala, M.
\newblock Assessing data mining results on matrices with randomization.
\newblock In \emph{Proceedings of the 10th IEEE International Conference on
  Data Mining (ICDM 2010)}, pages 959--964, 2010.

\bibitem[Ojala(2011)]{Ojala2011}
Ojala, M.
\newblock \emph{Randomization Algorithms for Assessing the Significance of Data
  Mining Results}.
\newblock PhD thesis, Aalto University School of Science, 2011.

\bibitem[Pelleg and Moore(2000)]{Pelleg2000}
Pelleg, D. and Moore, A.~W.
\newblock X-means: Extending k-means with efficient estimation of the number of
  clusters.
\newblock In \emph{Proceedings of the Seventeenth International Conference on
  Machine Learning}, ICML '00, pages 727--734, San Francisco, CA, USA, 2000.
  Morgan Kaufmann Publishers Inc.
\newblock ISBN 1-55860-707-2.
\newblock URL \url{http://dl.acm.org/citation.cfm?id=645529.657808}.

\bibitem[Pfitzner et~al.(2009)Pfitzner, Leibbrandt, and
  Powers]{pfitzner:2009:a}
Pfitzner, D., Leibbrandt, R., and Powers, D.
\newblock Characterization and evaluation of similarity measures for pairs of
  clusterings.
\newblock \emph{Knowledge and Information Systems}, 19\penalty0 (3):\penalty0
  361--394, 2009.

\bibitem[{R Core Team}(2014)]{R:2014:a}
{R Core Team}.
\newblock \emph{R: A Language and Environment for Statistical Computing}.
\newblock R Foundation for Statistical Computing, Vienna, Austria, 2014.
\newblock URL \url{http://www.R-project.org/}.

\bibitem[Rand(1971)]{rand:1971:a}
Rand, W.~M.
\newblock Objective criteria for the evaluation of clustering methods.
\newblock \emph{Journal of the American Statistical association}, 66\penalty0
  (336):\penalty0 846--850, 1971.

\bibitem[Smyth(1996)]{Smyth96clusteringusing}
Smyth, P.
\newblock Clustering using monte carlo cross-validation.
\newblock \emph{ACM SIGKDD}, pages 126--133, 1996.

\bibitem[Steinley(2008)]{steinley:2008:a}
Steinley, D.
\newblock Stability analysis in k-means clustering.
\newblock \emph{British Journal of Mathematical and Statistical Psychology},
  61\penalty0 (2):\penalty0 255--273, 2008.

\bibitem[Taskar et~al.(2001)Taskar, Segal, and Koller]{Taskar2001}
Taskar, B., Segal, E., and Koller, D.
\newblock Probabilistic classification and clustering in relational data.
\newblock In \emph{Proceedings of the 17th International Joint Conference on
  Artificial Intelligence - Volume 2}, IJCAI'01, pages 870--876, San Francisco,
  CA, USA, 2001. Morgan Kaufmann Publishers Inc.
\newblock ISBN 1-55860-812-5, 978-1-558-60812-2.
\newblock URL \url{http://dl.acm.org/citation.cfm?id=1642194.1642210}.

\bibitem[Topchy et~al.(2005)Topchy, Jain, and Punch]{topchy:2005:a}
Topchy, A., Jain, A.~K., and Punch, W.
\newblock Clustering ensembles: Models of consensus and weak partitions.
\newblock \emph{IEEE Transactions on pattern analysis and machine
  intelligence}, 27\penalty0 (12):\penalty0 1866--1881, 2005.

\bibitem[Tushir and Srivastava(2010)]{Tushir2010}
Tushir, M. and Srivastava, S.
\newblock A new kernelized hybrid c-mean clustering model with optimized
  parameters.
\newblock \emph{Appl. Soft Comput.}, 10\penalty0 (2):\penalty0 381--389, Mar.
  2010.
\newblock ISSN 1568-4946.
\newblock \doi{10.1016/j.asoc.2009.08.020}.
\newblock URL \url{http://dx.doi.org/10.1016/j.asoc.2009.08.020}.

\bibitem[von Luxburg(2010)]{Luxburg:2010:a}
von Luxburg, U.
\newblock Clustering stability: An overview.
\newblock \emph{Foundations and Trends in Machine Learning}, 2\penalty0
  (3):\penalty0 235--274, 2010.
\newblock \doi{http://arxiv.org/abs/1007.1075v1}.

\bibitem[Wagner and Wagner(2007)]{wagner:2007:a}
Wagner, S. and Wagner, D.
\newblock \emph{Comparing clusterings: an overview}.
\newblock Universit{\"a}t Karlsruhe, Fakult{\"a}t f{\"u}r Informatik Karlsruhe,
  2007.

\bibitem[Wang et~al.(2011)Wang, Shan, and Banerjee]{wang:2011:a}
Wang, H., Shan, H., and Banerjee, A.
\newblock Bayesian cluster ensembles.
\newblock \emph{Statistical Analysis and Data Mining}, 4\penalty0 (1):\penalty0
  54--70, 2011.

\bibitem[Wu et~al.(2003)Wu, Xie, and Yu]{Wu03}
Wu, Z., Xie, W., and Yu, J.
\newblock Fuzzy clustering with a fuzzy covariance matrix.
\newblock In \emph{In Proceedings of the Fifth International Conference on
  Computational Intelligence and Multimedia Applications (ICCIMA)}, pages 1--6,
  2003.

\end{thebibliography}

\end{document}